\def\eqref#1{equation~\ref{#1}}
\def\1{\bm{1}}
\DeclareMathAlphabet{\mathsfit}{\encodingdefault}{\sfdefault}{m}{sl}
\SetMathAlphabet{\mathsfit}{bold}{\encodingdefault}{\sfdefault}{bx}{n}
\definecolor{uclablue}{rgb}{0.15, 0.45, 0.68}
\NewDocumentCommand{\xx}
{ mO{} }{\textcolor{blue}{\textsuperscript{\textit{todo}}\textsf{\textbf{\small[#1]}}}}
\newcommand{\github}{\raisebox{-1.5pt}{\includegraphics[height=1.05em]{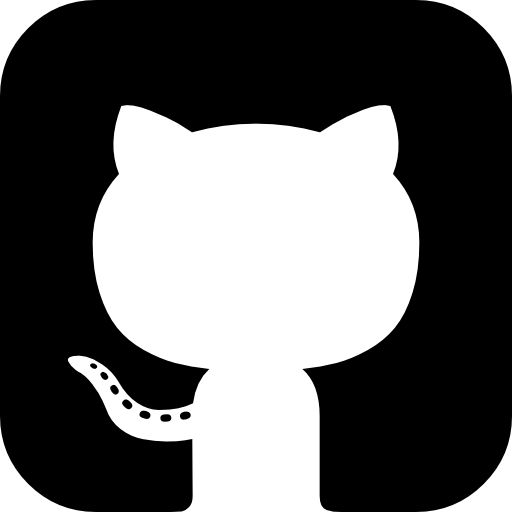}}\xspace}
\definecolor{greenbg}{RGB}{230, 255, 230}
\newtheorem{example}{Example}
\newtheorem{theorem}{Theorem}
\newtheorem{proposition}{Proposition}
\newtheorem{definition}{Definition}
\newtheorem*{theorem*}{Theorem}
\newtheorem*{proposition*}{Proposition}
\newtheorem*{lemma*}{Lemma}
\newcommand{\hsalpha}{{\alpha}}
\newcommand{\hseta}{{\eta}}
\newcommand{\hsepsilon}{{\tilde{\epsilon}}}
\newcommand{\mathL}{{\mathcal{L}}}
\newcommand{\hsf}{{q}}
\newcommand{\mf}{{f}}
\newcommand{\truemf}{f^*}
\newcommand{\egX}{{X}}
\newcommand{\egT}{{T}}
\newcommand{\egY}{{Y}}
\newcommand{\egS}{{S}}
\newcommand{\egU}{{u_s}}
\title{PIPCFR: Pseudo-outcome Imputation with Post-treatment \\ Variables for Individual Treatment Effect Estimation}
\author{
Zichuan Lin$^{*\dagger}$ \quad Xiaokai Huang$^*$ \quad Jiate Liu \quad Yuxuan Han \\ Jia Chen \quad Xiapeng Wu \quad Deheng Ye \\
\textbf{Tencent AI Lab} \\
% Codes: \href{https://github.com/AdaptVision/AdaptVision}{github.com/adaptvision/adaptvision}
}
\begin{document}
\maketitle
\renewcommand*{\thefootnote}{\fnsymbol{footnote}}
\footnotetext{* Equal contribution. $\dagger$ Project Leader.}
% \footnotetext{$\dagger$ Project Leader.}
\begin{abstract}
The estimation of individual treatment effects (ITE) focuses on predicting the outcome changes that result from a change in treatment. A fundamental challenge in observational data is that while we need to infer outcome differences under alternative treatments, we can only observe each individual's outcome under a single treatment.  Existing approaches address this limitation either by training with inferred pseudo-outcomes or by creating matched instance pairs. %However, these methods overlook post-treatment variables that not only directly influence outcomes but are also affected by exogenous noise. 
However, recent work has largely overlooked the potential impact of post-treatment variables on the outcome.
This oversight prevents existing methods from fully capturing outcome variability, resulting in increased variance in counterfactual predictions. This paper introduces \textbf{P}seudo-outcome \textbf{I}mputation with \textbf{P}ost-treatment Variables for \textbf{C}ounter\textbf{F}actual \textbf{R}egression (\textbf{PIPCFR}), a novel approach that incorporates post-treatment variables to improve pseudo-outcome imputation. We analyze the challenges inherent in utilizing post-treatment variables and establish a novel theoretical bound for ITE risk that explicitly connects post-treatment variables to ITE estimation accuracy. Unlike existing methods that ignore these variables or impose restrictive assumptions, PIPCFR learns effective representations that preserve informative components while mitigating bias. Empirical evaluations on both real-world and simulated datasets demonstrate that PIPCFR achieves significantly lower ITE errors compared to existing methods. 
\end{abstract}

\begin{center}
    {\fontfamily{pcr}\selectfont
        \begin{tabular}{rll}
            \github & \textbf{Github Repo} & \href{https://github.com/LinZichuan/PIPCFR.git}{[GitHub Page]} 
            % \\
            % \dataset & \textbf{Models} & \href{https://github.com/XinXU-USTC/TFPI-temp.git}{[Huggingface Models]} \\
        \end{tabular}
    }
\end{center}

\section{Introduction}

%%%% Intro %%%%
% 1. 因果推断观测不到反事实。现有的方法包括：(1) multi-head预测，（2）模拟反事实标签，等等。
Predicting the causal effect of different treatments or interventions is essential in domains such as medicine, finance, and advertising. While traditional approaches depended on randomized controlled trials (RCTs), recent advances focus on leveraging large-scale observational data to estimate treatment effects. 
% For example, retailers can analyze customer interactions—such as variations in ad click-through rates under different discount strategies—to model how pricing changes influence user engagement. 
For example, marketing strategists can analyze campaign effectiveness -- such as variations in monthly ROI under different advertising campaign -- to model how promotional strategies influence profitability. 
% However, a core limitation inherent to observational data is that it only records each individual’s response to a single treatment, leaving no direct evidence of how their behavior would differ under alternative conditions. 
% However, a challenge is that we observe each individual under only one treatment, and there is no direct supervision of how an individual's outcome would change if the treatment were different.
% However, the fundamental challenge lies in the fact that we only observe each individual under only one treatment, and there is no direct supervision of how an individual's outcome would change if the treatment were different.
However, the fundamental challenge lies in the fact that we observe each individual under only one treatment, with no direct supervision regarding how an individual's outcome would change if the treatment were different.

% This missing counterfactual information complicates accurate causal inference. 

% 2. 现有工作通过构建反事实的方式，来直接训练ITE。构建的方法包括：meta-learner, matching methods, generative model。最近，xxx提出了pairNet, 但现有工作忽略了post-treatment的影响。
Existing works address this challenge by imputing pseudo-outcomes for missing treatments in the training data and using them as supervision. 
These imputation methods include meta-learners~\citep{curth2021inductive,nie2021quasi,kunzel2019metalearners}, matching methods~\citep{nagalapatti2024pairnet,schwab2018perfect,kallus2020deepmatch}, and generative models~\citep{yoon2018ganite,louizos2017causal,bica2020estimating}, and their success depends critically on the quality of the inferred pseudo-outcomes. 
Recently, \citet{nagalapatti2024pairnet} propose PairNet, a paired instance-based training strategy that avoids relying on potentially noisy pseudo-outcome supervision by instead creating neighbors for each training instance and working directly with observed factual outcomes. %and instead works only with observed factual outcomes by creating neighbors for each training instance. 

%Moreover, existing matching methods rely on distance metrics to connect neighbors, which can fail if the distance metrics are flawed.
% pseudo-outcomes; pairnet; (1) 忽略了S （2）依赖于距离度量函数

% 当时构建反事实的主要方法是matching。
% 1. 耗时
% 2. 选什么distance metrics很重要
% 3. 对参数敏感

% 3. 考虑实际场景（给一张图）。随着S复杂，现有方法（CFRNet）的反事实方差越大（比事实方差大更多）。（现有的反事实标签构造的方法在这种场景下失效）
% 考虑一个实际场景。xxx。随着post treatment variables复杂，反事实方差越大。这种情况在现实场景中很普遍，比如。。。 但是，现有工作都忽略了post-treatment variables带来的不确定性。

% they may risk controlling latent post-treatment variables scrambled in the proxies, where latent post-treatment bias can be introduced in the estimation. 

% training时，能观测到X,T,Y,S
% testing时，只能观测到X
%%%%
% 1. inference时，我们只能观测到X；training时，能观测到X,S,T,Y
% 2. post-treatment variables 受到了外部U的干扰

% 1. 观测不到的外部U干扰
% 2. future uncertainty

% overlook uncertainty brought by external factors.

% 尽管这些方法在近年来取得了显著进展，但它们忽略了处理后变量的影响，这可能会在ITE（个体处理效应）估计中引入相当大的方差。
% Although these methods have achieved substantial progress in recent years, they overlook post-treatment variables that are potentially influenced by external factors, thereby resulting in increased variance in counterfactual predictions.
Despite substantial progress in recent years, existing methods face a critical limitation: they largely overlook the potential impact of post-treatment variables on the outcome, particularly when the outcome is sensitive to noise, which leads to increased variance in counterfactual predictions. 
% the potential influence of post-treatment variables
% \textbf{post-treatment uncertainty} can introduce considerable variance in the estimation of ITE.
%%%
% they may risk xxx, where latent post-treatment variance can be introduced in the estimation.
%%%
% they overlook the influence of post-treatment variables, which can introduce considerable variance in the estimation of ITE. 
% 考虑图~\ref{fig:example} 中关于广告的现实例子。当学习广告活动的因果效应时，营销人员通常会收集用户行为指标。这些指标通常代表了广告活动前用户状态（例如，广告活动启动前一周的用户参与度，反映潜在的用户偏好特征，作为混杂变量）和广告活动后状态（例如，广告活动后的用户浏览行为）的混合。
Consider a real-world advertising example illustrated in Figure~\ref{fig:example_and_variance}. Real-world data is often sequential in nature ($F_0 \sim F_K$). 
When analyzing the causal effect of an advertising campaign $T$ (e.g., treatment) at step $k$, marketers typically collect user behavior metrics that combine pre-treatment variables $X=[F_0, F_1, ..., F_{k-1}]$ (e.g., engagement levels before campaign launch, reflecting user preferences that act as confounding variables) and post-treatment variables $S=[F_k, F_{k+1}, ..., F_K]$ (e.g., user browsing behaviors after the campaign). Both pre- and post-treatment variables affect the outcome $Y$ (e.g., monthly ROI).
% 这些广告活动后状态可能同时受到来自X,T或者外部因素U的影响，而广告活动后状态又会对关注的结果(Y)产生较大的影响。当来自外部的因素U对S的影响越大，就会间接地对Y产生较大的影响。这会给ITE的预测引入极大的方差，尤其是当testing的时候观测不到S时。.
% 这些处理后变量可能会显著影响关注的结果，例如月度投资回报率。未能正确考虑它们的影响可能会在处理效应估计中引入大量预测偏差，从而可能导致错误的因果推断。
% These post-treatment variables may significantly influence the outcome of interest such as monthly ROI. 
% Therefore, neglecting to properly account for the impact of post-treatment variables can lead to inaccurate causal inferences. 
% These post-treatment variables ($S$) are simultaneously influenced by multiple factors: the pre-campaign variables ($X$), the treatment ($T$), and exogenous noise such as concurrent marketing activities. Together, these variables influence outcomes of interest ($Y$) such as monthly ROI. 

% When unobserved exogenous noise substantially impact post-treatment variables, they indirectly create considerable effects on outcomes, introducing significant variance in counterfactual predictions.
% Unobserved exogenous noise can indirectly affect outcomes through post-treatment variables, thereby introducing substantial variance into counterfactual predictions.
Uncertainty in post-treatment variables introduces substantial variance into counterfactual predictions.
%\zichuan{Need to mention existing work handling post-treatment variables}
%, especially in testing scenarios where $S$ is not observed. 
Without accounting for these post-treatment variables, existing methods cannot fully capture outcome variability, resulting in increased uncertainty in counterfactual estimates. 
%introduce substantial prediction bias in treatment effect estimation, potentially leading to erroneous causal inferences. 
% Figure1b confirms this issue for PairNet, a recent state-of-the-art method. 为了更直观地展示post-treatment variables的影响，我们在Figure中展示了：随着S的变化，counterfactual variance的变化。如图所示，随着S的variance加大，现有方法对于反事实的预测方差大幅度上升。 参考DKLITE的说法
% 当U对S的影响越大时，PairNet对反事实的预测会展现出更大的方差，同时ITE error显著增大。
Figure~\ref{fig:example_and_variance} confirms this issue in PairNet, a recent state-of-the-art method: as the post-treatment variable uncertainty increases, PairNet shows greater variance in counterfactual predictions, leading to significantly higher ITE estimation errors. 

\begin{figure}[t]
    \centering
    \includegraphics[width=0.95\linewidth]{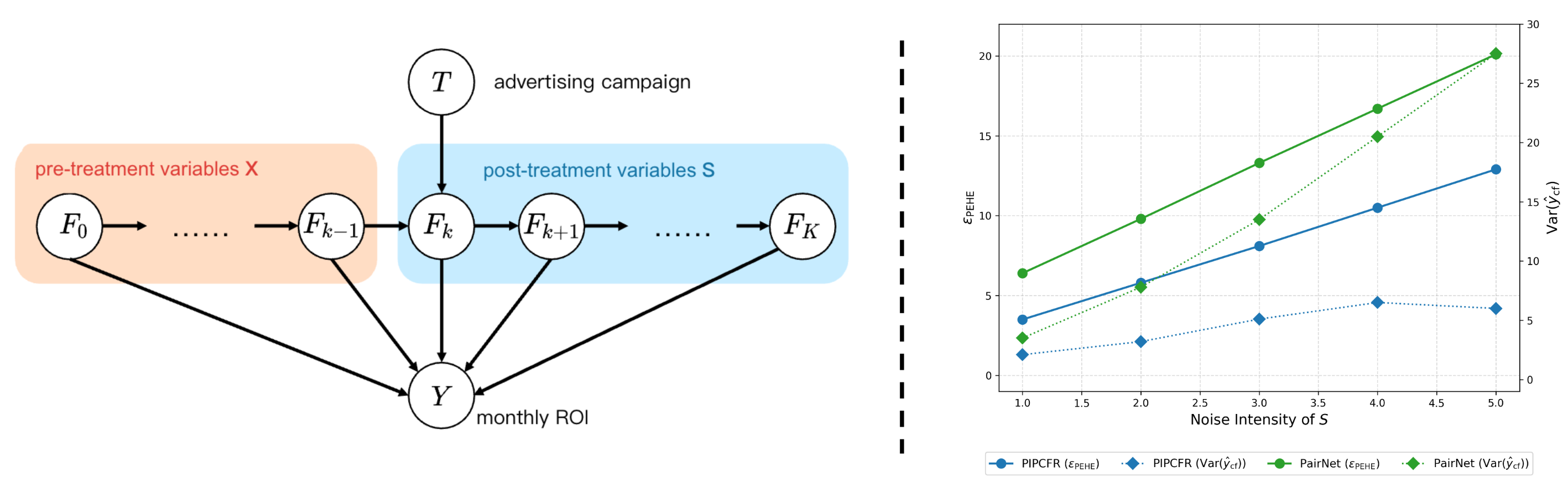}
    \caption{
    Left: A real-world advertising example.
    Right: The performance of PIPCFR and PairNet on synthetic data: as the noise of $S$ increases, PairNet shows greater counterfactual variance and ITE errors. In contrast, PIPCFR significantly reduces counterfactual variance, thereby achieving lower ITE errors.
    }
    \label{fig:example_and_variance}
\end{figure}

% \vspace{-5pt} 
% VAR(CF),VAR(F),PEHE (Toy example)

% 4. 我们的方法系统性地分析了S对于反事实error的影响，并利用S设计了构造反事实标签的方法。
% 在这篇工作中，我们提出了一种新的构建反事实标签的方法。（简要两句话描述我们的方法）。理论上，bound。实验验证了有效性 & contributions
In this paper, we propose \textbf{P}seudo-outcome \textbf{I}mputation with \textbf{P}ost-treatment Variables for \textbf{C}ounter\textbf{F}actual \textbf{R}egression (\textbf{PIPCFR}), an alternative training strategy that incorporates post-treatment variables to improve pseudo-outcome imputation. 
% propose an alternative training strategy called PIPCFR (i.e., Pseudo-outcome Imputation with Post-treatment variables for Counterfactual Regression), that imputes pseudo-outcomes by learning effective representations for post-treatment variables. 
% We analyze the challanges of using post-treatment variables
Unlike existing methods that either remove post-treatment variables~\citep{zhu2024causal} or impose restrictive assumptions like the front-door criterion~\citep{xu2023causal}, our approach learns effective post-treatment representations that preserve informative components while mitigating bias. 
% Unlike existing matching methods that create neighboring samples using pre-defined distance metrics, our method employs a teacher-student architecture. 
% This architecture consists of a teacher network that uses both $X$ and $S$ as inputs and a student network that uses only $X$ as input. The student learns by using the counterfactual outcomes generated by the teacher network as its own counterfactual labels. 

PIPCFR consists of three modules: 1) a post-treatment representation network extracts effective representation $\phi$ from $S$; 2) a pseudo-counterfactual constructor takes $\phi$ and $X$ as inputs to predict counterfactual outcomes; 3) a counterfactual regression module takes $X$ as inputs and learns from both factual outcomes and inferred counterfactual outcomes.
% PIPCFR employs a teacher-student architecture where the teacher network uses both pre-treatment covariates ($X$) and post-treatment variables ($S$) as inputs, while the student network uses only $X$. The student learns by using the counterfactual outcomes generated by the teacher as its own counterfactual labels. 
We analyze the challanges of using post-treatment variables and derive a novel theoretical bound for ITE risk that explicitly connects post-treatment variables to estimation accuracy. 
As shown in Figure~\ref{fig:example_and_variance}, PIPCFR significantly reduces counterfactual variance, thereby achieving lower ITE estimation errors.
% This theoretical foundation provides principled guidance for our algorithm design, addressing the fundamental tension between reducing variance and avoiding post-treatment bias. 
% This bound builds a connection between post-treatment variables and the ITE risk. 
% Motivated by the theoretical results, we propose a novel algorithm, called PIPCFR, to minimize the ITE risk by extracting useful information from post-treatment variables. 
Our experiments show that PIPCFR outperforms existing methods on both real-world and simulated datasets, with notable improvements in challenging scenarios with high exogenous noise and long temporal dependencies. Furthermore, we show that PIPCFR is compatible with various ITE estimation techniques, consistently improving their performance when combined. 
%In addition, we show that PIPCFR is compatible with existing ITE methods and consistently improves performance when combined with them. 
%, and avoids creating neighbors using pre-defined distance metrics. 
% We propose an alternative training loss called PIP loss which imputes pseudo-outcomes inferred by the teacher network. 
% We theoretically investigate the gap between PIP loss and ITE risk and derive a novel bound for ITE risk. 
% We systematically analyze the influence of post-treatment variables on generalization error, and derive a novel theoretical bound for ITE risk. 
% Motivated by the theoretical results, we propose algorithms to minimize the bound in order to minimize the ITE risk.
%learn effective representations for post-treatment variables and reduce ITE error by imputing pseudo-outcomes. 
% We show such xxx is more closely align than existing methods.

% that avoids committing to noisy supervision from pseudooutcomes and instead works only with observed factual
% outcomes. Unlike all previous methods, PairNet imposes
% a loss on the difference of outcomes of pairs of instances.
% The pairs are chosen to be close in the covariate space while
% having different treatments. We show that such paired
% instance-based training more closely aligns with the ITE
% estimation task than existing methods.

% 5. 总结我们的三点contribution。
In summary, our contributions are:
\begin{enumerate}[leftmargin=*]
    % \item We introduce a novel approach that leverages post-treatment variables to impute lower-variance pseudo-outcomes, thereby reducing the variance of counterfactual predictions.
    % \item We introduce a novel approach that imputes pseudo-outcomes with post-treatment variables to reduce the variance of counterfactual predictions.
    \item We introduce PIPCFR, a novel approach that imputes pseudo-outcomes using post-treatment variables to reduce the variance of counterfactual predictions while mitigating potential bias. 
    % \item We analyze the challenges of using post-treatment variables, establish a novel theoretical bound for ITE risk, and develop a new algorithm PIPCFR to minimize ITE risk.
    \item We establish a theoretical bound for ITE risk that explicitly connects post-treatment variables to estimation accuracy, providing principled guidance for algorithm design.
    % \item We systematically analyze the relationship between generatlization error and post-treatment variables. We further establish a novel theoretical bound for ITE risk.
    % \item We compare PIPCFR with state-of-the-art ITE methods on diverse datasets. We observe that PIPCFR provides significant gains. We conduct several experiments to explain the superior performance of PIPCFR and analyze its sensitivity.
    \item We show that PIPCFR significantly outperforms existing ITE methods on both real-world and simulated datasets, particularly in challenging scenarios with high exogenous noise and long temporal dependencies. 
    % Through extensive empirical evaluation across multiple datasets, we demonstrate that HCR consistently outperforms state-of-the-art methods. Additionally, we conduct comprehensive analyses to evaluate the robustness of HCR.
\end{enumerate}

\section{Related Work}

% \subsection{Pseudo-outcomes Imputation}

\paragraph{Pseudo-outcomes Imputation} \textit{Meta-Learners} employ a two-stage approach: first training a base model on observational data, then constructing an ITE model using derived pseudo-outcomes. X-Learner~\citep{kunzel2019metalearners} combines predictions from separate treatment groups. DR-Learner~\citep{kennedy2023towards} incorporates propensity scores for doubly robust estimates. R-Learner~\citep{nie2021quasi} directly learns treatment effects using Robinson's decomposition. However, these methods suffer from error propagation between stages and ignore post-treatment variables. 
\textit{Matching Methods}~\citep{schwab2018perfect,kallus2020deepmatch,iacus2012causal} impute outcomes from similar instances using strategies like propensity score matching and covariate-based distance measures. While PairNet~\citep{nagalapatti2024pairnet} improves upon these by using only observed outcomes and creating neighbors, it still depends heavily on accurate distance metrics. Like meta-learners, matching methods also fail to account for post-treatment variables. 
\textit{Generative Methods} synthesize pseudo-outcomes through various approaches. GANITE~\citep{yoon2018ganite} employs GANs to generate counterfactuals, SciGAN~\citep{bica2020estimating} extends this to continuous treatments, while other works explore Gaussian Processes~\citep{zhang2020learning} and Variational Autoencoders~\citep{rissanen2021critical}.

\paragraph{Post-treatment Variables in Causal Inference}
In causal inference, existing methods primarily handle post-treatment variables in two ways. The first employs front-door adjustment~\citep{jeong2022finding,wienobst2022finding,xu2023causal}, which require post-treatment variables to satisfy the front-door criterion~\citep{pearl2009causality} -- for instance, the variables must intercept all directed paths from treatment to outcome. This requirement limits practical application in real-world scenarios where such strict conditions rarely hold. 
The second approach attempts to disentangle post-treatment variables from observed variables and systematically remove their influence~\citep{zhu2024causal,acharya2016explaining,elwert2014endogenous,king2006dangers}. While this strategy effectively avoids post-treatment bias by deliberately not controlling for these variables, it inevitably discards valuable information contained within $S$ that could improve estimation accuracy.
% Unlike existing works, our method does not rely on the restrictive front-door criterion and reduces uncertainty by learning effective representations of $S$, preserving its informative components while mitigating bias. 
Unlike these existing works, our method neither rely on the restrictive front-door criterion nor completely discard post-treatment information. Instead, PIPCFR extracts effective representations of post-treatment variables while simultaneously mitigating post-treatment bias. 
\section{Preliminary}

% \subsection{Problem Setup}

We follow the Neyman-Rubin potential outcomes framework~\citep{rubin2005causal}, where an individual with observed covariates $x \in \mathcal{X}$, when subjected to a treatment $t \in \mathcal{T}$, exhibits an outcome $Y(t) \in \mathbb{R}$. We denote post-treatment variables as $s \in \mathcal{S}$. We consider binary treatment ($\mathcal{T}=\{0,1\}$) in this paper. 
% The space of covariates is a bounded subset $\mathcal{X} \subset \mathbb{R}^v$. The outcome space is $\mathcal{Y} \subset \mathbb{R}$. 
% We denote post-treatment variables as $s = [s^{(1)}, s^{(2)}, ..., s^{(K)}], s^{(i)} \in \mathbb{R}^d$ where $K$ represents the number of timesteps and each $s^{(i)}$ is a $d$-dimension vector. 
% Following \citep{cheng2021long}, w
% We are given an observational dataset $\mathcal{D} = \{ (x_i, t_i, s_i, y_i): i \in 1 ... N\}$ where $(x,t)$ are samples drawn from a joint distribution $P(X,T)$ such that $X \not\perp T$, $s_i$ is post-treatment variables and $y_i$ is the observed outcome. 
% We define our dataset as $\mathcal{D}=[\mathcal{O},\mathcal{E}]$, where $\mathcal{O}=\{x_i,t_i,s_i,y_i\}^{n_o}_{i=1}$ is the training dataset and $\mathcal{E}=\{x_i\}^{n_e}_{i=1}$ is the testing dataset. 
During training, we have an observational dataset $\mathcal{D} = \{x_i,t_i,s_i,y_i\}_{i=1}^N$ that includes post-treatment variables, whereas during testing, we must infer ITE based solely on pre-treatment covariates $x$. 
Samples $(x, t, s, y)$ are drawn from a joint distribution $p(x, t, s, y)$. %such that $X \not\perp T$, and $s_i$ is the post-treatment variables and $y_i$ is the observed outcome. 
We denote the covariate distribution as $p(x)$ and the conditional distribution $p(x | t)$ as $p_t(x)$. Define $p_t(x,s)=p(x,s\mid t)$. The marginal treatment distribution is $u_t = p(T=t)$. 
Let the true outcome function $\truemf_t(x) = \mathbb{E}[Y(t) \mid x]$. % and $\truehsf_t(x,s) = \mathbb{E}[(Y(t) \mid x,s)]$. 
Our goal is to learn a model $\hat{\tau}(x)$ that estimates the change in outcome $\tau^*(x) = \truemf_1(x) - \truemf_0(x)$ on a testing sample $x \sim p(x)$. Solving this problem requires us to minimize the following \textbf{ITE risk}: $\mathbb{E}_{x\sim P(X)} \left[  (\tau^*(x) - \hat{\tau}(x)) ^2 \right]$.
% \begin{equation}
% \begin{aligned}
%     % \mathbb{E}_{P(X)} \left[ \| (\tau^*(X) - \hat{\tau}(X)) \|_2 \right]
%     \epsilon_{ITE} & = \mathbb{E}_{P(X)} \left[  (\tau^*(X) - \hat{\tau}(X)) ^2 \right] \\
%     & = \mathbb{E}_{P(X)} \left[      \right]
% \end{aligned}
% \end{equation}

% All data are sampled from the underlying distribution $P$. 
% Our objective is to estimate the ITE: $\tau(x)=\mathbb{E}[Y_1-Y_0\mid X=x]$. % and the ATE: $\delta=\mathbb E[Y_1-Y_0]$.

\paragraph{Assumptions.} Following prior work, we make the following standard assumptions: A1 \textit{Overlap:} Every individual has a non-zero probability of being assigned any treatment, i.e., $p(t|x) \in (0,1); \forall x, t$. A2 \textit{Stable Unit Treatment Value Assumption:} The outcomes for any sample $i$ are independent of treatment assignments on other samples $j \neq i$.  A3 \textit{Unconfoundedness:} The observed covariates block all backdoor paths between treatments and outcomes. %A4 \textit{Comparability:} The conditional distribution $P(y|x,t,s)$ in the source data is identical to that in the target data~\citep{athey2019surrogate,cheng2021long}. 

It is worth noting that the assumptions in this paper are the basic assumptions of causal inference; we made no additional assumptions about $S$ or the functional form.

% \subsection{Estimators and Estimation Error}

We denote the representation of post-treatment variables as $\psi_{\hseta}:\mathcal S\rightarrow \mathcal{R}$. 
We define \textit{outcome predictor} as $f:\mathcal{X}\times 
\mathcal{T} \rightarrow \mathbb{R}$, and define \textit{pseudo-outcome constructor} as $\hsf:\mathcal{X}\times\mathcal{T}\times\mathcal{R}\rightarrow \mathbb{R}$. 
To simplify notations, we denote $\phi = \psi_{\hseta}(s)$ as the \textit{post-treatment representation}. We also denote $\mf_t(x)=f(x,t)$ and $\hsf_t(x,\phi)=\hsf(x,t,\phi)$.
We define $p_t(\phi\mid x)=p(\phi\mid x,t)$. 

% \begin{definition}
%     The \textit{error residual} for an instance $x$ under a treatment $t$ is defined as $r_t(x) = f_{\omega}(x,t) - f^*(x,t)$.
% \end{definition}

% \begin{definition}  \label{def:residual}
%     The error residual
%     \begin{equation}
%     \begin{aligned}
%         r_t(x)=\mf_t(x)-\truemf_t(x),\quad
%         \tilde r_t(x,s)=\hsf_t(x,\psi_\eta(s))-\truemf_t(x),\quad
%         \ddot r_t(x,s)=\mf_t(x)-\hsf_t(x,\psi_\eta(s)).
%     \end{aligned}
%     \end{equation}
% \end{definition}

\begin{definition}
    The \textit{factual errors} of $f$ are:
    \begin{equation}
        \begin{aligned}
            % \epsilon_F = \sum_t u_t \epsilon_F^t, \quad \epsilon_F^t =& \int_{\mathcal{X}} r_t(x)^2 p_t(x) dx, \\
            % \epsilon_F = \sum_t u_t \epsilon_F^t, \quad \epsilon_F^t =& \int_{\mathcal{X}} (\mf_t(x)-\truemf_t(x))^2 p_t(x) dx, \\
            \epsilon_F = \sum_t u_t \int_{\mathcal{X}} (\mf_t(x)-\truemf_t(x))^2 p_t(x) dx, \\
        \end{aligned}
    \end{equation}
\end{definition}

\begin{definition}
    The \textit{counterfactual errors} of $f$ and $q$ are:
    \begin{equation}
        \begin{aligned}
            % \epsilon_{CF}^t =& \int_{\mathcal{X}} r_t(x)^2 p_{1-t}(x) dx, \quad \epsilon_{CF} = \sum_t u_{1-t} \epsilon_{CF}^t,\\
            % \tilde\epsilon_{CF}^t =& \int_{\mathcal{X}} \tilde r_t(x,s)^2 p_{1-t}(s,x) dx, \quad \tilde\epsilon_{CF} = \sum_t u_{1-t} \tilde\epsilon_{CF}^t.\\
            % \epsilon_{CF} = \sum_t u_{1-t} \int_{\mathcal{X}} r_t(x)^2 p_{1-t}(x) dx,\\
            % \tilde\epsilon_{CF} = \sum_t u_{1-t} \int_{\mathcal{X}} \tilde r_t(x,s)^2 p_{1-t}(s,x) dx.\\
            \epsilon_{CF} =& \sum_t u_{1-t} \int_{\mathcal{X}} (\mf_t(x)-\truemf_t(x))^2 p_{1-t}(x) dx,\\
            \tilde\epsilon_{CF} =& \sum_t u_{1-t} \int_{\mathcal{X}} (\hsf_t(x,\phi)-\truemf_t(x))^2 p_{1-t}(s,x) dx.\\
        \end{aligned}
    \end{equation}
\end{definition}

\begin{definition} \label{def:cf_errors}
    The \textit{expected difference} of $f$ and $q$ on counterfactual predictions are:
    \begin{equation}
        \begin{aligned}
            % \ddot\epsilon_{CF}^t =& \int_{\mathcal{X}} \ddot r_t(x,s)^2 p_{1-t}(s,x) dx, \quad \ddot\epsilon_{CF} = \sum_t u_{1-t} \ddot\epsilon_{CF}^t.
            % \ddot\epsilon_{CF} = \sum_t u_{1-t} \int_{\mathcal{X}} \ddot r_t(x,s)^2 p_{1-t}(s,x) dx.
            \ddot\epsilon_{CF} = \sum_t u_{1-t} \int_{\mathcal{X}} (\mf_t(x)-\hsf_t(x,\phi))^2 p_{1-t}(s,x) dx.
        \end{aligned}
    \end{equation}
\end{definition}

\begin{definition}
    The \textit{ITE risk} is defined in terms of residuals as:
    % \[
    % \epsilon_{ITE} = \int_{\mathcal{X}} (r_1(x) - r_0(x))^2 p(x) dx
    % \]
    \[
    \epsilon_{ITE} = \int_{\mathcal{X}} ((\mf_1(x)-\truemf_1(x)) - (\mf_0(x)-\truemf_0(x)))^2 p(x) dx
    \]
    
\end{definition}

% \begin{definition}
%     Define $p^1_{\psi}(\pi):=p_{\psi}(\pi\mid t=1)$, $p^0_{\psi}(\pi):=p_{\psi}(\pi\mid t=0)$, as the representation distributions induced by the function $\psi$ for the treated and control groups.
% \end{definition}

\begin{definition}
    The ITE estimation of the outcome predictor is: 
    $\quad \hat\tau(x) :=f_1(x)-f_0(x)$. 
    % and post-treatment networks are:
    % \begin{equation}
    % \begin{split}
    %     % \hat\tau_\omega(x) &:=f_\omega(x,1)-f_\omega(x,0), 
    %     % \\
    %     % \hat\tau_{\hsomega}(x,s) &:=f_{\hsomega}(x,1,s)-f_{\hsomega}(x,0,s).
        
    %     %, 
    %     %\quad
    %     %\hat\tau_{\hsomega}(x,s) :=f_{\hsomega}(x,1,s)-f_{\hsomega}(x,0,s).
    % \end{split}
    % \end{equation}
\end{definition}

\begin{proposition}
    The ITE risk can be decomposed into a sum of factual error, counterfactual error, and error residuals as follows:
    \begin{equation}\label{eq:pehe_decom}
    % \epsilon_{ITE}=\epsilon_F+\epsilon_{CF}-2\mathbb E_{x,t\sim P(X,T)}[r_t(x)r_{1-t}(x)].
    \epsilon_{ITE}=\epsilon_F+\epsilon_{CF}-2\mathbb E_{x,t\sim P(X,T)}[(\mf_t(x)-\truemf_t(x))  (\mf_{1-t}(x)-\truemf_{1-t}(x))].
    % \mf_t(x)-\truemf_t(x)
\end{equation}
\end{proposition}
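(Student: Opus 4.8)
The plan is to expand the ITE risk directly from its definition and recognize the result as the expansion of a squared difference. First I would write $\epsilon_{ITE} = \int_{\mathcal X} \big((\mf_1(x)-\truemf_1(x)) - (\mf_0(x)-\truemf_0(x))\big)^2 p(x)\,dx$ and expand the square pointwise as $a_1(x)^2 - 2a_1(x)a_0(x) + a_0(x)^2$, where $a_t(x) = \mf_t(x)-\truemf_t(x)$ is the residual of the outcome predictor for treatment $t$. This splits the integral into three terms, each against the marginal $p(x)$.

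The key step is then to rewrite $\int_{\mathcal X} a_t(x)^2 p(x)\,dx$ for $t=0,1$ using the law of total probability $p(x) = \sum_{t'} u_{t'} p_{t'}(x)$, since $u_{t'} = p(T=t')$ and $p_{t'}(x) = p(x\mid t')$. Splitting $\int a_0(x)^2 p(x)\,dx = u_0\int a_0(x)^2 p_0(x)\,dx + u_1 \int a_0(x)^2 p_1(x)\,dx$, and similarly for $a_1$, I would group the ``factual'' pieces (treatment $t$ integrated against $p_t$) into $\epsilon_F$ and the ``counterfactual'' pieces (treatment $t$ integrated against $p_{1-t}$) into $\epsilon_{CF}$, matching Definitions for $\epsilon_F$ and $\epsilon_{CF}$ exactly after reindexing the sum over $t$. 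Concretely, $u_0\int a_0^2 p_0 + u_1 \int a_1^2 p_1 = \epsilon_F$ and $u_1 \int a_0^2 p_1 + u_0 \int a_1^2 p_0 = \epsilon_{CF}$ (the latter is $\sum_t u_{1-t}\int (\mf_t-\truemf_t)^2 p_{1-t}$).

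Finally, the cross term $-2\int a_1(x) a_0(x) p(x)\,dx$ must be recognized as the expectation appearing in the statement. Writing it as $-2\,\mathbb E_{x\sim P(X)}[(\mf_1(x)-\truemf_1(x))(\mf_0(x)-\truemf_0(x))]$ and then noting that, since the integrand is symmetric in the roles of the two treatments, averaging over $t\sim P(T\mid x)$ leaves it unchanged — i.e., $\mathbb E_{x,t\sim P(X,T)}[(\mf_t(x)-\truemf_t(x))(\mf_{1-t}(x)-\truemf_{1-t}(x))] = \mathbb E_{x\sim P(X)}[(\mf_1(x)-\truemf_1(x))(\mf_0(x)-\truemf_0(x))]$, because the product $a_t a_{1-t} = a_1 a_0$ regardless of $t$. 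This identifies the cross term with the claimed expression and completes the decomposition.

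The only mildly delicate point is the bookkeeping in the cross term: one must check that the $\mathbb E_{x,t}$ notation in the proposition is indeed just the marginal expectation over $x$ in disguise (the $t$-average is vacuous here because the summand does not genuinely depend on which of the two labels is called $t$). Everything else is a routine application of the total-probability split $p(x)=\sum_{t} u_t p_t(x)$ and reindexing; I expect no real obstacle.
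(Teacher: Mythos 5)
Your proposal is correct and matches the paper's own treatment: the paper carries out exactly this total-probability split $p(x)=\sum_t u_t p_t(x)$ followed by a pointwise expansion of $(r_1(x)-r_0(x))^2$ at the start of the Theorem~\ref{thm:ite_bound} proof, and the grouping into $\epsilon_F$, $\epsilon_{CF}$, and the cross term is identical to yours. Your observation that the $t$-average in the cross term is vacuous (since $r_t r_{1-t}=r_1 r_0$ for either label) is the right reading of the $\mathbb{E}_{x,t\sim P(X,T)}$ notation, so there is no gap.
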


\section{Methodology}

\begin{figure}[t]
    \centering
    % \begin{subfigure}[b]{0.28\columnwidth} 
    %     \centering
    %     \includegraphics[width=\textwidth]{figures/perfect_matching_arch2.png} % 使用\textwidth
    %     \caption{Perfect Match}
    %     \label{fig:perfect_matching}
    % \end{subfigure}
    % \hfill
    % \begin{subfigure}[b]{0.28\columnwidth} 
    %     \centering
    %     \includegraphics[width=\textwidth]{figures/pairnet_arch2.png} % 使用\textwidth
    %     \caption{PairNet}
    %     \label{fig:pairnet}
    % \end{subfigure}
    % \hfill
    \begin{subfigure}[b]{0.95\columnwidth} 
        \centering
        \includegraphics[width=\textwidth]{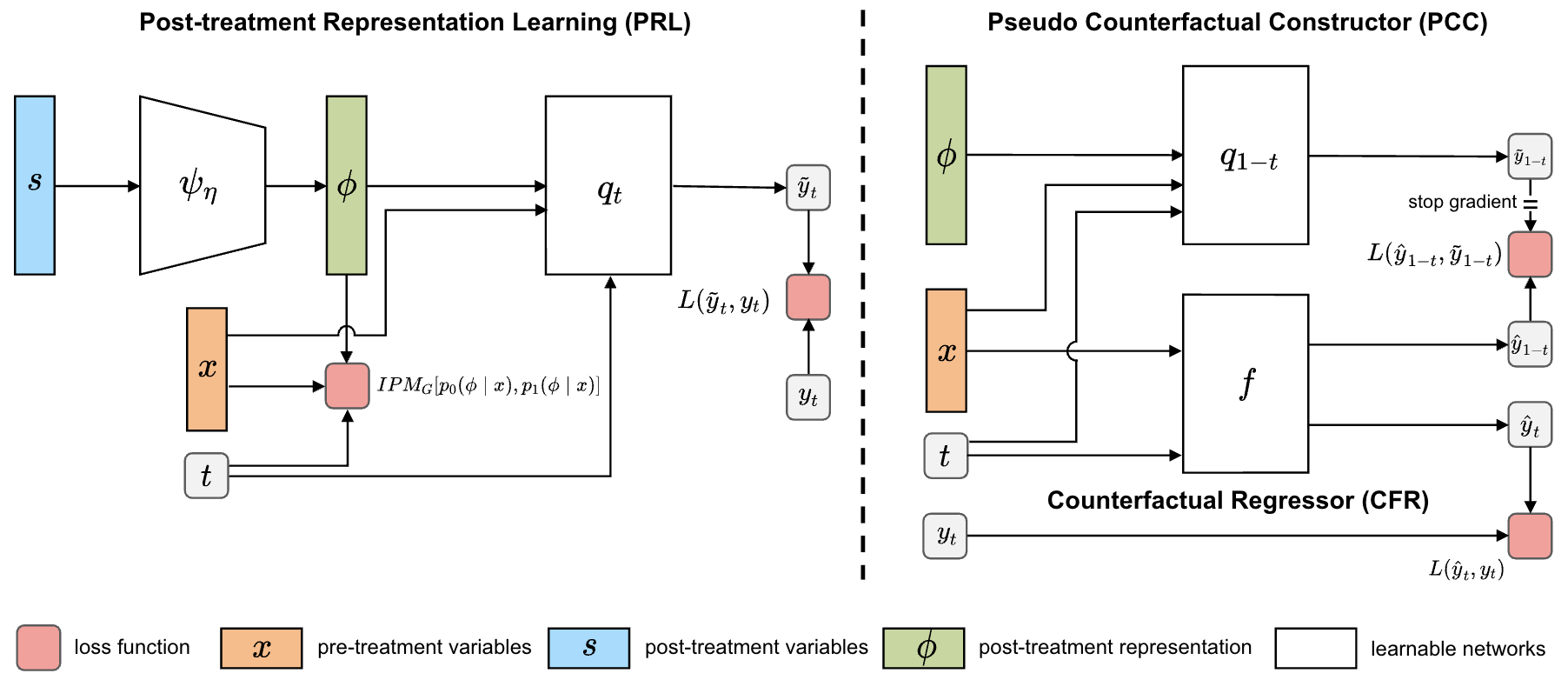}
        % \caption{PIPCFR}
        \label{fig:pipcfr}
    \end{subfigure}
    \caption{\textbf{Method Overview.} Left: Learning the post-treatment representation $\phi$. Right: The PCC module $q$ takes $\phi$ as input and constructs pseudo counterfactual outcomes, serving as pseudo labels for the counterfactual regressor $f$.}
    % \redtext{Comparison of architectures. 
    % Perfect Match augments samples within a minibatch with their propensity matched nearest neighbours. 
    % PairNet minimizes losses over observed instance pairs that are close in the covariate space while having different treatments. 
    % PIPCFR imputes pseudo counterfactuals by incorporating post-treatment variables through a teacher-student architecture.
    % }}
    \label{fig:method_overview}
\end{figure}

\subsection{Overview}

% 我们方法的核心insight是利用post-treatment variables来构建更准确的反事实伪标签，从而让因果推断模型能够预测出更准确的ITE. As shown in Figure~\ref{fig:method_overview}}, 我们的方法包括三个模块(module): 1) post-treatment representation learning (PRL), which extracts valuable information from post-treatment variables; 2) pseudo counterfactual constructor (PCC), which leverages post-treatment representation to construct accurate counterfactual pseudo labels ; 3) counterfactual regression (CFR), which learns from both factual outcomes and pseudo counterfactual outcomes provided by the PCC module. 

The key insight of our method is to leverage post-treatment variables to construct accurate counterfactual pseudo-labels, thereby enabling causal inference models to estimate precise Individual Treatment Effects (ITEs). As shown in Figure~\ref{fig:method_overview}, our framework consists of three modules: 1) Post-treatment Representation Learning Networks (PRL) $\psi_\eta$, which extracts informative representations from post-treatment variables; 2) Pseudo Counterfactual Constructor (PCC) $q$, which utilizes the post-treatment representations to construct accurate counterfactual pseudo-labels; 3) Counterfactual Regressor (CFR) $f$, which learns from both observed factual outcomes and the pseudo counterfactual outcomes generated by the PCC module.

Using the pseudo-counterfactual outcomes generated by $\hsf$, we propose an alternative training loss called PIP loss ($\epsilon_{PIP}$) to minimize the ITE risk as follows:
\begin{equation} \label{obj:pip}
% \epsilon_{PIP}=\epsilon_F+\ddot\epsilon_{CF}-2\mathbb E_{x,t,s\sim p(x,t,s)}[r_t(x)\ddot r_{1-t}(x,s)], 
\epsilon_{PIP}=\epsilon_F+\ddot\epsilon_{CF}-2\mathbb E_{x,t,s\sim p(x,t,s)}[(\mf_t(x)-\truemf_t(x)) (\mf_{1-t}(x)-\hsf_{1-t}(x,\phi))], 
% r_t(x)=\mf_t(x)-\truemf_t(x),\quad
% \tilde r_t(x,s)=\hsf_t(x,\psi_\eta(s))-\truemf_t(x),\quad
% \ddot r_t(x,s)=\mf_t(x)-\hsf_t(x,\psi_\eta(s)).
\end{equation}
where $\ddot\epsilon_{CF}$ (Definition~\ref{def:cf_errors}) represents the expected difference between the counterfactual predictions of $q$ and $f$. %fitting the pseudo-counterfactuals inferred by the teacher network 
% while $\ddot r_{1-t}(x,s)$ (Definition~\ref{def:residual}) denotes the error residual between the counterfactual predictions of $\hsf$ and $f$.
%between the estimated counterfactual and the pseudo-counterfactual outcomes. 
%the negative correlation between the estimated residuals on factual and counterfactual samples.
%$\ddot r^\eta_{1-t}(x,s)$ 

The key in Eq.~\ref{obj:pip} is how to obtain post-treatment representation $\phi$. Prior research~\citep{acharya2016explaining,elwert2014endogenous,zhu2024causal} have investigated the negative effects of controlling post-treatment variables and show that improper use of post-treatment variables can introduce post-treatment bias in the ITE estimation.

% directly using post-treatment variables as input will introduce post-treatment bias in the estimation. The negative effects of controlling post-treatment variables have been investigated in prior research~\citep{acharya2016explaining,elwert2014endogenous,zhu2024causal}. 

% To optimize the PIP loss~(\ref{obj:pip}), we need to address two issues. First, we need to ensure the quality of pseudo-outcomes inferred by \redtext{$\hsf$}, otherwise the imputed \redtext{values} cannot benefit the generalization ability. 
% 然而，直接将S作为输入会引入post-treatment bias（引用文献）。为了研究post-treatment variables与反事实误差之间的联系，我们给出了以下的定理。<Theorem>。这个定理说明了：post-treatment network的反事实预测误差 is upper bounded by the sum of 1) 预测outcome的方差 2) post-treatment bias。因此，直接将S作为输入，会由于引入了post-treatment bias而让反事实error变得更高。
% 第2段：说明难点：现有工作普遍认为使用S会引入post-treatment bias。举个例子。当我们直接用S时会带来偏差，从而使得反事实预测。因此，为了更有效地利用S，我们需要从S中提取有效信息\phi（figure2(c)）。定理1 给出了 \phi与PEHE之间的联系。
% Second, directly using post-treatment variables as input will introduce post-treatment bias in the estimation. The negative effects of controlling post-treatment variables have been investigated in prior research~\citep{acharya2016explaining,elwert2014endogenous,zhu2024causal}. 

\subsection{Motivating Example}\label{sec:PIP}

To leverage post-treatment variables effectively, we present a simple study to illustrate how post-treatment variables affect the counterfactual estimation in our setting. 

\begin{example} \label{example:post_treatment_bias}
    Consider the structural equation model:
    \begin{equation}
        \begin{aligned}
            \egX&\leftarrow \mathcal N(0,\sigma_x^2),\quad\egT^*\leftarrow\ \egX+\mathcal N(0,\sigma_t^2),\quad\egT\leftarrow\mathbf 1(\egT^*>0),\\
            \egU\leftarrow&\mathcal N(0,\sigma_u^2),\quad
            \egS\leftarrow\egX+\alpha_1\egT+\egU,\quad\egY\leftarrow\egX+\alpha_2\egT+\egS+\mathcal N(0, 1),
        \end{aligned}
    \end{equation}
    % where $\egX$ represents the pre-treatment covariates, $\egT$ is the binary treatment, $\egS$ is a post-treatment variable, and $\egY$ is the final outcome. 
    The parameter $\sigma_x,\sigma_t,\sigma_u$ defines the scale of random variation within the model. 
    % specifically, $\sigma^2$ is the variance of $\egX$, $\egU$ and the noise term in $\egT^*$. 
    The variable $\egU$ is an unobserved exogenous noise in $\egS$. It is drawn from $\mathcal N(0,\sigma_u^2)$ and is independent of the treatment $\egT$. 
    
    Consider a sample $(x,t,s,y(t))$. To predict the counterfactual outcome $y(1-t)$, we can:
    \begin{itemize}
        \item[(a)] regress from $(\egX,\egT)$: $\hat{y}(1-t)-y(1-t)\stackrel{d}{\rightarrow}\mathcal N(0,\sigma_u^2+1)$.
        \item[(b)] regress from $(\egX,\egT,\egS)$: $\hat{y}(1-t)-y(1-t)\stackrel{d}{\rightarrow}\mathcal N((2t-1)\alpha_1,1)$.
        \item[(c)] regress from $(\egX,\egT,\egU)$: $\hat{y}(1-t)-y(1-t)\stackrel{d}{\rightarrow}\mathcal N(0,1)$ ,
    \end{itemize}
    where $\stackrel{d}{\rightarrow}$ denotes the convergence in distribution.
\end{example}

% 上述例子表明，仅仅基于 pre-treatment covariates 和 treatment 预测反事实输出，估计方差会受到 $\sigma$ 影响，如果 $\sigma$ 较大，预测结果有较强的不确定性。引入 post-treatment 作为特征，能降低估计方差，然后会导致预测结果存在偏差（Bias）。这是因为 $\egS$ 的分布受到 $T$ 的影响，在treatment 和 control 分组间 variant 特征。如果将 $\egU$ 作为特征，则能够降低估计方差。同时，由于 $\egU$ 在 treatment 和 control 分组中的invariant 的，不会引入 Bias。然而，$\egU$ 一般是不可观测的，幸运的是，我们能够观测到 $\egS$ ，所以可以从 $\egS$ 提取 $\tilde  U$ 的相关信息。Therefore, to effectively optimize the objective~\ref{obj:pip}, we need to learn representations of post-treatment variables to extract useful information while removing information that can introduce bias.
% To investigate the relationship between post-treatment variables and counterfactual error, we present the following theorem. 

Please refer to Appendix section~\ref{app:example} for the proof. This example demonstrates that: (a) When predicting counterfactual outcomes based solely on pre-treatment covariates $X$ and treatment $T$, the variance of the prediction error is influenced by $\sigma_u$. Higher values of $\sigma_u$ result in greater prediction uncertainty. (b) While incorporating the post-treatment variable $\egS$ as input can reduce this variance, it introduces bias into the predictions. This bias occurs because $\egS$ is influenced by the treatment $\egT$, causing its distribution to vary between treatment and control groups. 
(c) In contrast, using the variable $\egU$ as a feature offers the best of both worlds: it reduces the variance of counterfactual predictions without introducing bias, since the distribution of $\egU$ remains invariant across different treatments. Although $\egU$ is typically unobservable, we can potentially extract information about $\egU$ from the observable $\egS$. % (which contains $\egU$ as a component).
% In contrast, using the variable $\egU$ as a feature not only reduces the variance of the prediction error but also avoids introducing bias, as the distribution of $\egU$ is independent of the treatment $\egT$ (i.e., its characteristics are invariant between the treatment and control groups). However, $\egU$ is generally unobservable. Fortunately, since we can observe $\egS$ (which includes $\egU$ as a component), information about $\egU$ can, in principle, be extracted or isolated from $\egS$. 

\subsection{A Novel Bound: Bridging ITE Risk and Post-treatment Variables}

Therefore, to construct effective pseudo labels for the PIP loss~(\ref{obj:pip}), we need to learn representations of post-treatment variables that extract useful variance-reducing information while eliminating components that would introduce bias.
% Therefore, to construct effective pseudo labels for the objective~(\ref{obj:pip}), we need to learn representations of post-treatment variables to extract useful information while removing information that can introduce bias.

% 第3段：Theorem 1

% \zichuan{xk: Theorem1}

% \begin{theorem}
    
% \end{theorem}

% This theorem states that the counterfactual prediction error of the post-treatment network is upper bounded by the sum of 1) the variance of the predicted outcome and 2) the post-treatment bias. Therefore, directly using S as input increases the counterfactual error due to the introduction of post-treatment bias.

% 为了消除bias，我们可以尝试从S中提取表征。根据Eq.(bound)，当表征S符合两个条件时，bound会更紧：（1）与T无关（2）与Y相关。
% To eliminate bias, we can attempt to extract representations from S. According to Eq. (bound), the bound will be tighter when the representations of 
% S meet two conditions: (1) they are independent of T and (2) they are related to Y.

% 直觉上，从由post-treatment variables构建的伪标签中学习会使得主网络的反事实更准，从而使得ITE risk更低。我们从理论上给出了主网络的pehe bound。<Theorem>
% Intuitively, learning from pseudo-labels constructed by post-treatment variables will make the counterfactual predictions of the main network more accurate, thereby reducing the ITE risk. We theoretically provide the PEHE bound for the main network. 

% \zichuan{xk: Theorem2}

% To address these issues, 
To achieve this, we investigate the gap between the ITE risk $\epsilon_{ITE}$ and our PIP loss $\epsilon_{PIP}$, and provide a novel bound for the ITE risk that explicitly connect post-treatment variables to estimation accuracy.
\begin{theorem}\label{thm:ite_bound}
    % Assume that exists $\delta>0$ such that $\mathbb{E}_{x,t,s\sim p(x,t,s)}[r_t(x)\ddot r_{1-t}(x,s)]\leq\delta\tilde{\epsilon}_{CF}$, we have
    Assume that exists $\delta>0$ such that $\mathbb{E}_{x,t,s\sim p(x,t,s)}[(\mf_t(x)-\truemf_t(x)) (\mf_{1-t}(x)-\hsf_{1-t}(x,\psi_{\hseta}(s)))]\leq\delta\tilde{\epsilon}_{CF}$, we have
    % r_t(x)=,\quad
%         \tilde r_t(x,s)=\hsf_t(x,\psi_\eta(s))-\truemf_t(x),\quad
%         \ddot r_t(x,s)=\mf_t(x)-\hsf_t(x,\psi_\eta(s)).
    \begin{equation}\label{eq:ite_bound}
        \small
        \begin{aligned}
        \epsilon_{ITE}\leq 
        \underbrace{\epsilon_{PIP}}_{\textup{PIP Loss}} 
        +
        \underbrace{\mathbb E_{x\sim p(x)}[B\cdot IPM_G(p_0(\phi\mid x),p_1(\phi\mid x))]}_{\textup{Post-Treatment Bias}}
        +
        \underbrace{(2\delta+1)\tilde{\epsilon}_{CF}}_{\textup{Generalization Error of $q$}}
        +
        2\sum_{t=0,1}u_t\left[\sqrt{\epsilon^t_FQ_t(\psi_\eta,\hsf)}\right],
        \end{aligned}
    \end{equation}

% where $Q_t(\psi_\eta,\hsf)=\int_{\mathcal X}\int_{\mathcal S} [\truemf_{1-t}(x)-\hsf(x,1-t,\psi_\eta(s))]^2p_{t}(s,x)dx$ can be regarded as capturing the counterfactual outcome information of function $\hsf$ by the introduction of $s$.
where $Q_t(\psi_\eta,\hsf)=\int_{\mathcal X}\int_{\mathcal S} [\truemf(x)-\hsf(x,1-t,\psi_\eta(s))]^2p_{t}(s,x)dx$ can be regarded as evaluating how well predictor $\hsf$ models (or represents) counterfactual outcome information, where this modeling is facilitated by the introduction of $s$. Additionally, there exists a constant $B$ such that $\frac{1}{B}\overline r_t(x,\phi)\in G$ where $\overline r_t(x,\phi)=\mf_t(x)-\hsf_t(x,\phi)$.
    
\end{theorem}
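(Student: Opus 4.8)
The plan is to start from the exact decomposition of $\epsilon_{ITE}$ in Eq.~(\ref{eq:pehe_decom}) and rewrite it in terms of the PIP loss $\epsilon_{PIP}$ in Eq.~(\ref{obj:pip}), so that the difference $\epsilon_{ITE}-\epsilon_{PIP}$ is a sum of correction terms that must each be bounded. Concretely, subtract Eq.~(\ref{obj:pip}) from Eq.~(\ref{eq:pehe_decom}): the $\epsilon_F$ terms cancel, leaving $\epsilon_{ITE}-\epsilon_{PIP}=(\epsilon_{CF}-\ddot\epsilon_{CF})-2\big(\mathbb{E}[(\mf_t-\truemf_t)(\mf_{1-t}-\truemf_{1-t})]-\mathbb{E}[(\mf_t-\truemf_t)(\mf_{1-t}-\hsf_{1-t})]\big)$. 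The second bracket collapses (via $\mf_{1-t}-\truemf_{1-t}=(\mf_{1-t}-\hsf_{1-t})+(\hsf_{1-t}-\truemf_{1-t})$) to $-2\mathbb{E}_{x,t,s}[(\mf_t(x)-\truemf_t(x))(\hsf_{1-t}(x,\phi)-\truemf_{1-t}(x))]$; by Cauchy--Schwarz this is at most $2\sum_t u_t\sqrt{\epsilon^t_F\,Q_t(\psi_\eta,\hsf)}$, which produces the last term of Eq.~(\ref{eq:ite_bound}) (here $\epsilon^t_F$ is the per-treatment factual error and $Q_t$ is exactly the integral defined in the statement, being the counterfactual error of $\hsf$ restricted to the factual population of treatment $t$).

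The remaining piece is $\epsilon_{CF}-\ddot\epsilon_{CF}$. I would handle it by inserting the true function: write $\mf_t(x)-\truemf_t(x)=\overline r_t(x,\phi)+(\hsf_t(x,\phi)-\truemf_t(x))$ where $\overline r_t(x,\phi)=\mf_t(x)-\hsf_t(x,\phi)$, and expand the square defining $\epsilon_{CF}$ against the measure $p_{1-t}(s,x)$. One gets $\epsilon_{CF}$ (re-expressed over the joint $(x,s)$ measure, which is legitimate since $\mf_t$ does not depend on $s$ and the $x$-marginal of $p_{1-t}(s,x)$ is $p_{1-t}(x)$) $=\ddot\epsilon_{CF}+\tilde\epsilon_{CF}+2\sum_t u_{1-t}\int \overline r_t(x,\phi)(\hsf_t(x,\phi)-\truemf_t(x))\,p_{1-t}(s,x)$. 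The cross term is where the IPM enters: the factor $\overline r_t(x,\phi)$ lies (up to scaling $B$) in the function class $G$, so for fixed $x$ the inner $s$-integral against $p_{1-t}(\phi\mid x)$ can be compared to the integral against $p_{t}(\phi\mid x)$ up to $B\cdot IPM_G(p_0(\phi\mid x),p_1(\phi\mid x))$, after which the integral against $p_t(\phi\mid x)$ — the \emph{factual} conditional for treatment $1-t$'s counterpart, i.e. the distribution under which $\hsf_t$ was actually trained — is controlled by $\tilde\epsilon_{CF}$-type quantities via Cauchy--Schwarz and absorbed into the $(2\delta+1)\tilde\epsilon_{CF}$ term together with the $\delta\tilde\epsilon_{CF}$ coming from the theorem's hypothesis. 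Summing: $\epsilon_{CF}-\ddot\epsilon_{CF}\le \tilde\epsilon_{CF}+\mathbb{E}_{x}[B\cdot IPM_G(p_0(\phi\mid x),p_1(\phi\mid x))]+2\delta\tilde\epsilon_{CF}$, which together with the Cauchy--Schwarz term above assembles Eq.~(\ref{eq:ite_bound}).

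I expect the main obstacle to be the bookkeeping around the two different conditional measures $p_{1-t}(\phi\mid x)$ and $p_t(\phi\mid x)$ in the cross term: one must carefully argue that after paying the $IPM_G$ price to swap the counterfactual conditional for the factual one, the resulting quantity is genuinely a training error of $\hsf$ (so it can be named $\tilde\epsilon_{CF}$ or bounded by it), rather than yet another counterfactual object. Getting the measure changes, the role of the $\delta$-assumption, and the constant $B$ (defined so that $\frac1B\overline r_t(x,\phi)\in G$) to line up so the final coefficients are exactly $(2\delta+1)$ and $B$ is the delicate part; the Cauchy--Schwarz steps and the algebraic expansion are routine once the decomposition is set up.
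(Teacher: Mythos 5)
Your overall skeleton is half right: subtracting the PIP decomposition from the ITE decomposition, writing the bracket difference as $-2\,\mathbb{E}_{x,t,s}[(\mf_t(x)-\truemf_t(x))(\hsf_{1-t}(x,\phi)-\truemf_{1-t}(x))]$, and bounding it by Cauchy--Schwarz as $2\sum_t u_t\sqrt{\epsilon_F^t Q_t(\psi_\eta,\hsf)}$ is exactly how the paper obtains the last term of the bound (its terms $H_3,H_4$), so that part matches. The genuine gap is in your treatment of $\epsilon_{CF}-\ddot\epsilon_{CF}$ (the paper's $H_1+H_2$). You expand $r_t(x)^2$ under the \emph{counterfactual} measure $p_{1-t}(s,x)$, which leaves the cross term $2\sum_t u_{1-t}\int\int \overline r_t(x,\phi)\,\tilde r_t(x,s)\,p_{1-t}(s,x)\,ds\,dx$, and you then propose to pay $B\cdot IPM_G$ to swap $p_{1-t}(\phi\mid x)$ for $p_t(\phi\mid x)$ inside it. That step is not licensed by the theorem's hypothesis: the IPM argument requires the whole integrand, viewed as a function of $\phi$ for fixed $x$, to lie in $B\cdot G$, whereas your integrand is the product $\overline r_t(x,\phi)\,(\hsf_t(x,\phi)-\truemf_t(x))$, which involves the unknown $\truemf_t$ and is not covered by the assumption $\frac{1}{B}\overline r_t(x,\phi)\in G$. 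Moreover, even granting the swap, the resulting object lives under the mixed measure $p_t(\phi\mid x)\,p_{1-t}(x)$; Cauchy--Schwarz there produces a geometric mean of second moments under that mixed measure, which is neither $\tilde\epsilon_{CF}$ nor the quantity $\mathbb{E}_{x,t,s\sim p(x,t,s)}[(\mf_t(x)-\truemf_t(x))(\mf_{1-t}(x)-\hsf_{1-t}(x,\phi))]$ controlled by the $\delta$-hypothesis, so the claimed absorption into $(2\delta+1)\tilde\epsilon_{CF}$ with exactly those constants does not follow from the steps you sketch --- and this is precisely the part you flag as ``delicate'' without resolving it.

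The paper avoids this by placing the decomposition differently: inside the counterfactual $x$-integral it writes $r_t(x)^2=\int_{\mathcal S}\big(\overline r_t(x,\phi)+\tilde r_t(x,s)\big)^2 p_t(s\mid x)\,ds$, i.e., it introduces $s$ through the \emph{factual} conditional $p_t(s\mid x)$, while $\ddot\epsilon_{CF}$ carries $p_{1-t}(s\mid x)$. The distributional mismatch in $\phi$ then falls only on the pure term $\overline r_t(x,\phi)^2$, whose integral against $p_t(\phi\mid x)-p_{1-t}(\phi\mid x)$ is the piece bounded by $B\cdot IPM_G(p_0(\phi\mid x),p_1(\phi\mid x))$, and the leftover $\int\tilde r_t^2\,p_t(s\mid x)+2\int\overline r_t\,\tilde r_t\,p_t(s\mid x)$ is the piece the $\delta$-assumption is invoked on, yielding $(2\delta+1)\tilde\epsilon_{CF}$. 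So the fix is to expand against the factual conditional rather than the counterfactual one; with your choice of measure the IPM term and the $(2\delta+1)\tilde\epsilon_{CF}$ term cannot be assembled from the stated assumptions.
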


Please refer to Appendix section~\ref{app:thm_ite_bound_proof} for the proof. Theorem~\ref{thm:ite_bound} shows that the ITE risk is upper bounded by the sum of: (1) the PIP Loss, (2) an Integral Probability Metric (IPM) that measures the distributional distance of representations $\phi$ across treatment groups---a source of post-treatment bias that our model aims to minimize to encourage $\phi\perp\!\!\!\perp t\mid x$, and (3) the generalization error of the pseudo-outcome constructor $\hsf$. 
%最后一项度量了hs网络增加S之后获得的关于反事实outcome的额外信息；当hs网络训练足够好的时候，最后一项只和数据本身的特性有关。
% The last term reflects the consistency in estimating the expected ???\zichuan{@xiaokai}
The last term measures the additional information about counterfactual outcomes gained by the pseudo-outcome constructor $\hsf$ after incorporating post-treatment variables; when $\hsf$ is optimally trained, this term depends only on the inherent characteristics of the data.

% 在 Eq(\ref{eq:pehe_bound}) 中的bound说明，主网络的 ITE estimation 会受到 post-treatment 

\subsection{Practical Algorithm} \label{sec:algorithm}

% motivated by 以上的分析，我们提出了算法。

% Building upon our theoretical results, 

% Pseudo-outcomes Imputation with post-treatment Variable for CFR (PIPCFR, POPCFR)

% Building upon our theoretical results, we propose PIPCFR, an end-to-end novel training strategy for individual treatment effect estimation.

% Our goal is to estimate ITE in the target data. 

Building upon our theoretical results, we propose PIPCFR, an end-to-end algorithm for ITE estimation. Motivated by Theorem~\ref{thm:ite_bound}, we minimize the upper bound in (\ref{eq:ite_bound}) in order to minimize the ITE risk.

\paragraph{Minimizing Post-treatment Bias} First, to eliminate the post-treatment bias, we need to minimize the IPM term in (\ref{eq:ite_bound}). However, in the observational data, we only have representations $\phi$ under one treatment, and cannot access both $p_0(\phi\mid x)$ and $p_1(\phi\mid x)$ at the same time. 
A straightforward method to compute this IPM term is to use matching strategy by creating neighbors to estimate the representations $\phi$ under the missing treatment, but this method is time-consuming and highly relies on distance metrics. We propose an alternative solution that instead minimizes the Kullback–Leibler (KL) divergence $KL(p(t\mid x)\| p(t\mid x, \phi))$. %to maximize the conditional independence between $\phi$ and $t$ given $x$.

% \begin{proposition}\label{prp:kl_ipm}
%     Assume that $\mathbb E_{\phi,x\sim P(\Phi,X)}[KL(P(t\mid x)\| P(t\mid x, \phi))]=0$,
%     % \begin{equation*}
%     %     \mathbb E_{s,x\sim P(S,X)}[KL(P(t\mid x)\| P(t\mid x, \phi))]=0,
%     % \end{equation*}
%     then we have $\mathbb{E}_{x\sim P(X)}[IPM_G(p_0(\phi\mid x),p_1(\phi\mid x))]=0$.
% \end{proposition}

% As a prerequisite, Theorem~\ref{thm:ite_bound} requires that the post-treatment representation $\Phi=\psi_{\hseta}(s)$ be conditionally independent of $T$ given $X$. 

\begin{proposition}[Relation between IPM and KL Divergence]\label{prp:kl_ipm}
Let $G$ be the family of norm-1 functions in a Reproducing Kernel Hilbert Space (RKHS). Assume $G$ is generated by a normalized kernel. The Integral Probability Metric (IPM) between two distributions $p_0(\phi\mid x)$ and $p_1(\phi\mid x))$ is bounded by their conditional KL divergence as follows:

\begin{equation*}
    IPM_G(p_0(\phi\mid x),p_1(\phi\mid x))\leq\sqrt{\frac{2}{\pi_0\pi_1} \mathbb{E}_{\phi \sim p(\phi|x)}\big[KL(p(t\mid x,\phi)\|p(t\mid x))\big]},
\end{equation*}

where $\pi_0=p(t=0\mid x),\pi_1=p(t=1\mid x)$. Specifically, if $\mathbb E_{s,x\sim p(s,x)}[KL(p(t\mid x, \phi)\| p(t\mid x))]=0$, then $\mathbb{E}_{x\sim p(x)}[IPM_G(p_0(\phi\mid x),p_1(\phi\mid x))]=0$.
\end{proposition}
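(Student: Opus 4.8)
The plan is to prove Proposition~\ref{prp:kl_ipm} by first reducing the IPM over an RKHS unit ball to a total variation type quantity, then bounding total variation by KL divergence via Pinsker's inequality, and finally relating the KL divergence between the conditional representation distributions $p_0(\phi\mid x)$ and $p_1(\phi\mid x)$ to the conditional KL divergence $KL(p(t\mid x,\phi)\|p(t\mid x))$ through a Bayes-rule identity. Concretely, the first step observes that since $G$ is the unit ball of an RKHS with a normalized kernel $k$ (so $\|k(\cdot,\phi)\|_{\mathcal H}=1$ for all $\phi$, hence $|g(\phi)|\le 1$ for all $g\in G$), the IPM $IPM_G(p_0(\phi\mid x),p_1(\phi\mid x))=\sup_{g\in G}\big|\int g\, d p_0(\phi\mid x)-\int g\, d p_1(\phi\mid x)\big|$ is upper bounded by $2\cdot TV(p_0(\phi\mid x),p_1(\phi\mid x))$, because every $g\in G$ is bounded by $1$ in sup norm and the IPM over functions bounded by $1$ equals twice the total variation distance (or at least is dominated by it).

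The second step is to bound $TV$ by $KL$. By Pinsker's inequality, $TV(p_0(\phi\mid x),p_1(\phi\mid x))\le \sqrt{\tfrac12 KL(p_0(\phi\mid x)\|p_1(\phi\mid x))}$, and similarly with the roles swapped; combining gives $2\,TV\le \sqrt{2\,KL(p_0(\phi\mid x)\|p_1(\phi\mid x))}$. The third and crucial step is to rewrite $KL(p_0(\phi\mid x)\|p_1(\phi\mid x))$ in terms of the conditional KL of the treatment given $(x,\phi)$. Using Bayes' rule $p(\phi\mid x,t)=\frac{p(t\mid x,\phi)\,p(\phi\mid x)}{p(t\mid x)}$, one can express the log-density ratio $\log\frac{p_0(\phi\mid x)}{p_1(\phi\mid x)}$ as $\log\frac{p(t=0\mid x,\phi)}{p(t=1\mid x,\phi)}-\log\frac{p(t=0\mid x)}{p(t=1\mid x)}$, and then take expectations. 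Carefully manipulating the resulting expression — effectively a chain-rule / data-processing manipulation for the joint distribution of $(t,\phi)$ given $x$ — should yield that a convex combination $\pi_0 KL(p_0(\phi\mid x)\|p_1(\phi\mid x))+\pi_1 KL(p_1(\phi\mid x)\|p_0(\phi\mid x))$ equals (or is bounded by) $\mathbb E_{\phi\sim p(\phi\mid x)}[KL(p(t\mid x,\phi)\|p(t\mid x))]$ minus a nonnegative term, hence the individual KL is at most $\tfrac{1}{\pi_0\pi_1}\mathbb E_{\phi}[KL(p(t\mid x,\phi)\|p(t\mid x))]$; plugging into the Pinsker bound gives the claimed $\sqrt{\tfrac{2}{\pi_0\pi_1}\,\mathbb E_{\phi\sim p(\phi\mid x)}[KL(p(t\mid x,\phi)\|p(t\mid x))]}$.

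The main obstacle I anticipate is the third step: cleanly going from a KL divergence between the two \emph{conditional-on-treatment} representation laws to an expectation (over $\phi$) of a KL divergence between the two \emph{conditional-on-representation} treatment laws, with the correct $\tfrac{1}{\pi_0\pi_1}$ constant. This requires the right Bayesian bookkeeping — one has to relate $KL(p(\phi\mid x,t)\| p(\phi\mid x))$ and the weighted sum over $t$ to $\mathbb E_\phi[KL(p(t\mid x,\phi)\|p(t\mid x))]$ via the mutual-information identity $I(T;\Phi\mid x)=\mathbb E_{t}[KL(p(\phi\mid x,t)\|p(\phi\mid x))]=\mathbb E_{\phi}[KL(p(t\mid x,\phi)\|p(t\mid x))]$, and then control each term $KL(p_0\|p_1)$ by the $t$-averaged version using $u_t$-type lower bounds $\pi_0,\pi_1$. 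The final sentence of the proposition (that vanishing conditional KL in expectation forces the expected IPM to vanish) then follows immediately: the right-hand side of the bound is zero whenever $\mathbb E_{s,x\sim p(s,x)}[KL(p(t\mid x,\phi)\|p(t\mid x))]=0$, since the outer expectation over $x$ of a nonnegative quantity bounded above by something with zero mean is itself zero.
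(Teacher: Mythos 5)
Your first two steps are sound: with a normalized kernel every $g$ in the unit ball $G$ satisfies $\|g\|_\infty\le\|g\|_{\mathcal H}\le 1$, so $IPM_G(p_0,p_1)\le \sup_{\|g\|_\infty\le1}|\int g\,dp_0-\int g\,dp_1|=2\,TV(p_0,p_1)$, and Pinsker controls $TV$ by a KL divergence. The genuine gap is your third step. The quantity $\mathbb{E}_{\phi\sim p(\phi\mid x)}[KL(p(t\mid x,\phi)\|p(t\mid x))]$ is the conditional mutual information $I(t;\phi\mid x)$, which (exactly as in Step 1 of the paper's proof) equals the $\pi$-weighted Jensen--Shannon divergence $\pi_1 KL(p_1\|M)+\pi_0 KL(p_0\|M)$ with $M=\pi_1p_1+\pi_0p_0$; this is always at most $-\pi_0\log\pi_0-\pi_1\log\pi_1<\infty$. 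By contrast $KL(p_0(\phi\mid x)\|p_1(\phi\mid x))$ can be $+\infty$, e.g.\ whenever $p_0(\cdot\mid x)$ and $p_1(\cdot\mid x)$ have disjoint supports---precisely the regime where $\phi$ encodes the treatment and the bound matters. Hence no inequality of the form $KL(p_0\|p_1)\le\frac{1}{\pi_0\pi_1}\mathbb{E}_\phi[KL(p(t\mid x,\phi)\|p(t\mid x))]$ can hold, and the identity you hope for ("convex combination of $KL(p_0\|p_1)$ and $KL(p_1\|p_0)$ equals the expected conditional KL minus a nonnegative term") does not exist: the Bayes/chain-rule bookkeeping produces divergences to the mixture $M$, and $KL(p_t\|M)$ only \emph{lower}-bounds $KL(p_0\|p_1)$ up to an additive $\log(1/\pi_{1-t})$, never upper-bounds it.

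The fix is small and keeps your elementary route. After the mutual-information identity, apply Pinsker to each term against the mixture: $KL(p_t\|M)\ge 2\,TV(p_t,M)^2$ and $TV(p_t,M)=\pi_{1-t}\,TV(p_0,p_1)$, so $\pi_1KL(p_1\|M)+\pi_0KL(p_0\|M)\ge 2\pi_0\pi_1\,TV(p_0,p_1)^2$. Combining with $IPM_G\le 2\,TV$ gives $IPM_G\le\sqrt{\tfrac{2}{\pi_0\pi_1}\,\mathbb{E}_\phi[KL(p(t\mid x,\phi)\|p(t\mid x))]}$, exactly the stated constant. The paper takes a different second step: it cites a kernel-specific inequality (Lemma 5 and Theorem 7 of \citet{hoyos2023representation}) giving $JS\ge\frac{\pi_0\pi_1}{2}IPM^2_{\kappa^2}$ for normalized kernels, thus bounding the MMD by the Jensen--Shannon divergence directly without passing through total variation; your corrected argument is more elementary and recovers the same bound. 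Your concluding deduction of the "Specifically" clause (nonnegativity of KL forcing the bound, hence the expected IPM, to vanish) is fine once the main inequality is established.
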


Please refer to Appendix section~\ref{app:prp_kl_ipm_proof} for the proof. Minimizing the KL term indicates maximizing the conditional independence between $\phi$ and $t$ given $x$. 
To achieve this, we introduce two propensity score models: $g(t,x)=\hat p(t\mid x)$ and $\tilde{g}(t,x, \phi)=\hat p(t\mid x,\phi)$, %, which share representation $\psi_{\lambda}(x)$ and are parameterized by $\hsmu$ and $\mu$ respectively. 
which are trained to predict treatments using the following objective: % w.r.t. $\hsmu$, $\mu$ and $\lambda$:
\begin{equation}\label{obj:L_p}
    \begin{aligned}
    % \mathcal{L}_{p} = - \frac{1}{N} \sum_{i=1}^{N} \left[ 
    % \log P_{\mu}(t_i \mid \psi_{\lambda}(x_i)) + \right. 
    % \left. \log P_{\hsmu}(t_i \mid \psi_{\lambda}(x_i), \phi_i ) 
    % \right] .\\
    % \min_{g,\tilde{g}} \mathcal{L}_{p} = - \frac{1}{N} \sum_{i=1}^{N} \left[ 
    % \log g(t_i \mid x_i) +  \log \tilde{g}(t_i \mid x_i, \psi_\eta (s_i) ) 
    % \right] ,\\
    \min_{g,\tilde{g}} \mathcal{L}_{p} = - \frac{1}{N} \sum_{i=1}^{N} \left[ 
    \log \tilde{g}(t_i,x_i, \psi_\eta (s_i)) +  \log g(t_i, x_i ) 
    \right] .
    % \min_{g,\tilde{g}} \mathcal{L}_{p} = - \frac{1}{N} \sum_{i=1}^{N} \left[ 
    % \sum_t t_i \log g(t_i,x_i) +  t \log \tilde{g}(t_i, x_i, \psi_\eta (s_i) ) 
    % \right] .
    \end{aligned}
\end{equation}

% To enforce conditional independence, we 
We then optimize the post-treatment representation networks $\psi_\eta$ to minimize the KL loss:
\begin{equation}\label{obj:L_kl}
    \begin{aligned}
    %& 
    \min_{\psi_{\eta}} \mathL_{KL}  
    %\\
    = 
    %& 
    % \frac{1}{N} \sum_{i=1}^{N} \left[ \gamma \cdot \text{KL} \left( g(t_i \mid x_i) \, \| \, \tilde{g}(t_i \mid x_i, \psi_{\eta}(s_i) )  \right) \right],
    \frac{1}{N} \sum_{i=1}^{N} \left [ \gamma \cdot \sum_t g(t , x_i) ( \log \tilde{g}(t, x_i, \psi_\eta(s_i)) - \log g(t, x_i) )  \right ],
    \end{aligned}
\end{equation}
where $\gamma$ is a hyper parameter. This induces conditional independence by ensuring that a propensity score model does not gain any additional information about treatment when given access to $\psi_{\eta}(s_i)$.

\paragraph{Minimizing Generalization Error of $q$} Second, to ensure the quality of the pseudo-outcomes, it's necessary to minimize the generalization error of $\hsf$. In principle, we can employ any existing causal inference algorithms for this purpose. In this paper, we simply use a TARNet-like method to construct the pseudo-outcome predictor.
% As shown in Figures 2 and 3, current methods typically rely on a matching approach. However, this approach has a critical dependency on appropriate distance metrics, which can lead to biased sample selection if unsuitable. Moreover, the matching process becomes computationally prohibitive when applied to large-scale datasets due to the substantial time required to identify suitable counterfactual samples. To address these issues, we use a deep learning approach that is capable of predicting counterfactual outcomes. Specifically, we propose using a TARNet-like method to construct the pseudo-outcome predictor. 
Specifically, we adopt CFRNet~\citep{shalit2017estimating} to minimize $\hsepsilon_{CF}$. The learning objective of CFRNet combines a weighted factual loss with an IPM distance that measures the divergence of covariate representations between treatment and control groups. 
We define the pseudo counterfactual constructor as $\hsf(x, t, \phi) = h(\psi_{\hsalpha}(x), t, \phi)$, where $\psi_{\hsalpha}$ is a representation
extraction function shared across treatments, and $h$ is the treatment-specific outcome prediction function. The learning objective is as follows:
\begin{equation}\label{obj:L_y}
\begin{aligned}
    \min_{h, \psi_\alpha, \psi_\eta} \mathL_{y} = & \frac{1}{N} \sum_{i=1}^{N} \beta_i \cdot \left( h \left( \psi_{\hsalpha}(x_i), t_i, \psi_{\eta}(s_i) \right) - y_i \right)^2  
    %\\ & 
    + IPM_G \left( \{\psi_{\hsalpha}(x_i)\}_{i:t_i=0}, \{\psi_{\hsalpha}(x_i)\}_{i:t_i=1} \right)
    ,\\ 
    \text{with} & \quad \beta_i = \frac{t_i}{2u} + \frac{1 - t_i}{2(1 - u)}, \quad \text{where} \quad u = \frac{1}{N} \sum_{i=1}^{N} t_i. 
    \\
\end{aligned}
\end{equation}
Through optimization of this term, representations $\phi$ extract information from $S$ that provides additional predictive power for outcomes conditional on $x$, effectively capturing the randomness of $S$.
%extract information that provides additional predictive power conditional on $x$, capturing the information of $U$ from post-treatment variables.  %\zichuan{$\phi$ captures the uncertainty?}

\paragraph{Minimizing PIP Loss} Third, we minimize the $\epsilon_{PIP}$ term. We use a TARNet-like architecture to construct the outcome predictor $f$. The empirical loss can be written as:
%we minimize the $\epsilon_{PIP}$ term, and the empirical loss can be written as:
\begin{equation} \label{obj:L_pip}
    \begin{aligned}
        \min_{f} \mathL_{pip} =  & \frac{1}{N} \sum_{i=1}^{N}  
        \left( \mf(x_i,t_i) - y_i  \right)^2  +  \left( \mf(x_i, 1- t_i) -  \hsf(x_i, 1- t_i, \psi_\eta(s_i)) \right)^2  \\ & - 2 \left(\mf(x_i,t_i) - y_i \right) \left(\mf(x_i, 1- t_i) - \hsf(x_i, 1- t_i, \psi_\eta(s_i)) \right) .
    \end{aligned}
\end{equation}

\begin{algorithm}
\caption{\textbf{PIPCFR}: \textbf{\underline{P}}seudo-outcome \textbf{\underline{I}}mputation with \textbf{\underline{P}}ost-treatment Variables for \textbf{\underline{C}}ounter\textbf{\underline{F}}actual \textbf{\underline{R}}egression}
\label{algo:pipcfr}
\textbf{Input}: Initialize training dataset $\mathcal{D}$, learnable networks $(g,\tilde{g},\psi_\eta,h,\psi_\alpha,f)$
\begin{algorithmic}[1]
% \STATE // Multi-Agent Trajectory Embedding Models
% \WHILE{not converged}
% \STATE Update $\mathrm{CEM}$ and $\mathrm{DEM}$ by Eq.~\ref{eqn.cem} and Eq.~\ref{eqn.dem} on $\mathcal{D}$
% \ENDWHILE
% \STATE // Retrieval-Based In-Context Decision Training
% \WHILE{not converged}
% \STATE Retrieve in-context trajectories $\mathcal{C}$ with $\mathrm{DEM}$
% \STATE Update $\pi_\theta$ with $\mathcal{C}$ by Eq.~\ref{eqn.pi}
% \ENDWHILE
% \STATE // Decentralized In-Context Fast Adaptation
% \FOR{$t=1,2,\cdots,E$}
\WHILE{not converged}
\STATE Sample a batch of training data $(x,t,s,y) \sim \mathcal{D}$
\STATE Update propensity score models $(g,\tilde{g})$ with loss $\mathcal{L}_p$
\STATE Update post-treatment representation networks $\psi_\eta$ with loss $\mathcal{L}_{KL}+\mathcal{L}_{y}$
\STATE Update pseudo-outcome constructor $(h, \psi_\alpha)$ with loss $\mathcal{L}_{y}$
\STATE Update outcome predictor $f$ with loss $\mathcal{L}_{pip}$ 
\ENDWHILE
% \WHILE{episode not ended}
% \STATE Retrieve in-context trajectories $\mathcal{C}$ with $\mathcal{S}$ and $\mathcal{B}'$
% \STATE Decentralized execution with $\pi_\theta$ conditioned on $\mathcal{C}$
% \ENDWHILE
% \STATE  Store episode trajectory in $\mathcal{B}$
% \ENDFOR
\end{algorithmic}
\end{algorithm}

\paragraph{Algorithm Summary} 
The whole training pipeline is summarized in Algorithm~\ref{algo:pipcfr}. %Since the pseudo-outcome predictor $\hsf$ provides pseudo-labels for the outcome predictor $f$, we refer to $\hsf$ as the \textbf{teacher network} and $f$ as the \textbf{student network}.
PIPCFR is trained end-to-end by jointly optimizing the objectives (\ref{obj:L_p}), (\ref{obj:L_kl}), (\ref{obj:L_y}) and (\ref{obj:L_pip}) using stochastic gradient descent. 
% The network architecture of PIPCFR is shown in Figure~\ref{fig:PIPCFR_arch}. 
% We use separate heads to estimate the outcomes under treatment and control. 
Please refer to Appendix~\ref{app:algorithm} for more details.

\section{Experiments}

In this section, we aim to answer the following research questions:
\begin{itemize}[leftmargin=*]
    \item RQ1: How does PIPCFR perform compared to state-of-the-art methods in ITE estimation?
    \item RQ2: How robust is PIPCFR when post-treatment variables exhibit complexity and noise? %In particular, we examine the performance of PIPCFR and existing methods 1) with varied scale of noise; 2) with varied timestep.
    % \item What roles do KL loss and IPM loss play in PIPCFR?
    \item RQ3: How does PIPCFR perform when combined with other methods? 
    \item RQ4: How sensitive is PIPCFR to the choice of hyper-parameters? %What are the effects of independence maximization in PIPCFR?
    \item RQ5: How does PIPCFR perform in real-world data?
    %\item What is the effect of imposing the independence maximization loss on the hindsight representation?
\end{itemize}

% \subsection{Experimental Design}

% \subsection{Experiments on Synthetic Data}

\paragraph{Dataset.}

We evaluate our approach on both real-world and simulated datasets, including IHDP~\citep{hill2011bayes}, News~\citep{johansson2016learning}), a synthetic dataset and a real-world dataset. For IHDP and News, post-treatment variables are generated following \citep{cheng2021long}. For the synthetic dataset, we simulate a temporal causal system with interacting variables over time. %incorporating more complex post-treatment variable information. 
Additional details are provided in Appendix~\ref{app:datasets}.
% For more details, see Appendix~\ref{app:datasets}.
% Additionally, the experiments in Section~\ref{sec:task_complexity} and Section~\ref{sec:ablation} are conducted on the synthetic dataset. 

% \zichuan{update here}

\paragraph{Baseline.} 
% PIPCFR encompasses three key components: pseudo counterfactual learning, representation learning, and the use of post-treatment variables for long-term effect estimation. 
% We consider several groups of baselines. 
% Our evaluation includes representative methods that address these aspects. 
We compare PIPCFR with several competitive baselines, as shown in Table~\ref{table:main_result}. 
(1) \textit{Tree Learners}: causal forest~\citep{athey2019generalized}, which builds upon traditional random forests to estimate ITE. 
(2) \textit{Meta-learners} such as XLearner~\citep{kunzel2019metalearners} that learn ITE directly after imputing pseudo-outcomes for missing treatments. 
(3) \textit{Representation-learning methods} like TARNet~\citep{shalit2017estimating}, CFRNet~\citep{shalit2017estimating}, and DRCFR~\citep{hassanpour2019learning}. These approaches share representations between different treatment heads while learning treatment-specific estimators with varied regularization techniques. 
(4) \textit{Propensity-score Learner} such as DragonNet~\citep{shi2019adapting} is a doubly robust method that imposes weighted factual losses. 
(5) \textit{Matching methods} including PairNet~\citep{nagalapatti2024pairnet} and PerfectMatch~\citep{schwab2018perfect}, which employ matching strategies by creating neighboring samples to impute outcomes for missing treatments.
Additionally, we consider two extended baselines: PairNet+S and PerfectMatch+S, which incorporate post-treatment variables $S$ when matching neighboring samples.
% (6) Methods learning surrogate representation. 
% Note that (1)-(4) are conventional methods which lack mechanisms to utilize post-treatment variables, while (5)-(6) are state-of-the-art long-term effect estimation methods. 
% See Appendix~\ref{app:baseline} for more details.

% In this paper, we assume that post-treatment variables are not observable in the target data. 
% Therefore, SIndex and EETE serve as particularly strong baselines as they leverage post-treatment variables in the target data, an advantage not available to PIPCFR and other methods.
%SIndex and EETE require the observation of $S$ in the target data, therefore they serve 

\paragraph{Metrics.} We evaluate ITE risk on a dataset $D_{tst}$ 
using the Presicion in Estimating Heterogeneous Effects ($\epsilon_{PEHE}$)~\citep{johansson2016learning} defined as: 
$\sqrt{ \frac{1}{|\mathcal D_{\text{test}}|} \sum_{i=1}^{|\mathcal D_{\text{test}}|} ( \hat{\tau}(x_i) - \tau^*(x_i) )^2 }$. 
We quantify PEHE (in) error for training instances and PEHE (out) error
for testing instances. 
% To ensure reproducibility, we repeat experiments 50 times and report average and standard deviation.
To ensure the reliability of the results, we sampled 50 datasets $(\mathcal D_{\text{train}}, \mathcal D_{\text{test}})$ from the data distribution $p(x,t,s,y)$ for training and testing, and reported the mean and standard deviation.

\begin{table*}[t!]
\centering
% \tiny
\scriptsize
\caption{RQ1: The performance of PIPCFR compared to baselines evaluated using PEHE error. The table shows mean values and standard deviation across 50 runs. Overall, PIPCFR demonstrates superior performance among all methods.}
\begin{tabular}{@{}l|lcccccc@{}}
\toprule
\multicolumn{2}{c}{} & \multicolumn{2}{c}{NEWS} & \multicolumn{2}{c}{IHDP} & \multicolumn{2}{c}{Synthetic} \\ \cmidrule(lr){3-4} \cmidrule(lr){5-6} \cmidrule(lr){7-8}
\multicolumn{2}{c}{} & $\epsilon_{\textup{PEHE in}}\downarrow$ & $\epsilon_{\textup{PEHE out}}\downarrow$ & $\epsilon_{\textup{PEHE in}}\downarrow$ & $\epsilon_{\textup{PEHE out}}\downarrow$ & $\epsilon_{\textup{PEHE in}}\downarrow$ & $\epsilon_{\textup{PEHE out}}\downarrow$  \\\midrule
\multirow{1}{*}{\centering Tree Learners} & Causal Forest~\citep{athey2019generalized} & 1.39 $\pm$ 0.00 & 1.40 $\pm$ 0.00 & 2.90 $\pm$ 0.01 & 2.92 $\pm$ 0.02 & 4.67 $\pm$ 0.00 & 4.73 $\pm$ 0.01 \\
\midrule
\multirow{3}{*}{\centering Meta Learners} & XLearner~\citep{kunzel2019metalearners} & 1.09 $\pm$ 0.00 & 1.12 $\pm$ 0.00 & 3.64 $\pm$ 0.06 & 3.90 $\pm$ 0.07 & 3.97 $\pm$ 0.29 & 4.09 $\pm$ 0.29 \\
& RLearner~\citep{nie2021quasi} & 1.33 $\pm$ 0.02 & 1.27 $\pm$ 0.02 & 8.85 $\pm$ 0.08 & 6.77 $\pm$ 0.10 & 19.74 $\pm$ 1.68 & 19.49 $\pm$ 1.69 \\
& DRLearner~\citep{kennedy2023towards} & 4.51 $\pm$ 0.08 & 3.82 $\pm$ 0.12 & 5.04 $\pm$ 0.23 & 4.99 $\pm$ 0.27 & 6.78 $\pm$ 1.04 & 6.93 $\pm$ 1.07 \\
\midrule
\multirow{5}{*}{\centering Rep. Learners} & DRCFR~\citep{hassanpour2019learning} & 0.84 $\pm$ 0.01 & 0.92 $\pm$ 0.01 & 2.98 $\pm$ 0.10 & 3.00 $\pm$ 0.11 & 7.80 $\pm$ 0.02 & 7.97 $\pm$ 0.02 \\
& ESCFR~\citep{wang2024optimal} & 1.78 $\pm$ 0.01 & 1.78 $\pm$ 0.01 & 2.93 $\pm$ 0.01 & 2.95 $\pm$ 0.01 & 3.93 $\pm$ 0.20 & 4.09 $\pm$ 0.20 \\
& TARNet~\citep{shalit2017estimating} & 0.69 $\pm$ 0.01 & 0.76 $\pm$ 0.01 & 4.40 $\pm$ 0.10 & 4.32 $\pm$ 0.11 & 6.67 $\pm$ 0.14 & 6.60 $\pm$ 0.13 \\
& CFRNet(MMD)~\citep{shalit2017estimating} & 0.84 $\pm$ 0.01 & 0.80 $\pm$ 0.01 & 3.38 $\pm$ 0.10 & 3.37 $\pm$ 0.09 & 6.42 $\pm$ 0.01 & 6.56 $\pm$ 0.01 \\
& CFRNet(WASS)~\citep{shalit2017estimating} & 0.81 $\pm$ 0.01 & 0.85 $\pm$ 0.01 & 2.89 $\pm$ 0.05 & 2.90 $\pm$ 0.04 & 4.96 $\pm$ 0.03 & 5.03 $\pm$ 0.03 \\
\midrule
\multirow{1}{*}{\centering PS. Learner} & DragonNet~\citep{shi2019adapting} & 0.49 $\pm$ 0.01 & 0.61 $\pm$ 0.01 & 3.68 $\pm$ 0.11 & 3.61 $\pm$ 0.10 & 5.04 $\pm$ 0.03 & 5.10 $\pm$ 0.03 \\
\midrule
\multirow{4}{*}{\centering Matching} 
& PairNet~\citep{nagalapatti2024pairnet} & 0.95 $\pm$ 0.01 & 1.11 $\pm$ 0.01 & 4.39 $\pm$ 0.01 & 4.43 $\pm$ 0.03 & 4.20 $\pm$ 0.01 & 4.29 $\pm$ 0.01 \\
& PairNet+S & 0.98 $\pm$ 0.01 & 1.13 $\pm$ 0.01 & 4.43 $\pm$ 0.01 & 4.46 $\pm$ 0.03 & 3.52 $\pm$ 0.01 & 3.62 $\pm$ 0.01 \\
& PerfectMatch~\citep{schwab2018perfect} & 0.96 $\pm$ 0.01 & 1.05 $\pm$ 0.01 & 13.23 $\pm$ 0.05 & 13.22 $\pm$ 0.08 & 4.86 $\pm$ 0.09 & 5.04 $\pm$ 0.09 \\
& PerfectMatch+S & 0.94 $\pm$ 0.01 & 1.06 $\pm$ 0.01 & 13.21 $\pm$ 0.05 & 13.23 $\pm$ 0.08 & 6.68 $\pm$ 0.17 & 6.86 $\pm$ 0.17 \\
\midrule
\multirow{2}{*}{\centering Ours}  & \textbf{PIPCFR (MMD)} & 0.30 $\pm$ 0.00 & 0.46 $\pm$ 0.00 & \textbf{1.96 $\pm$ 0.01} & \textbf{2.00 $\pm$ 0.01} & 3.09 $\pm$ 0.06 & 3.26 $\pm$ 0.06 \\
& \textbf{PIPCFR (WASS)} & \textbf{0.25 $\pm$ 0.00} & \textbf{0.44 $\pm$ 0.00} & 2.27 $\pm$ 0.01 & 2.35 $\pm$ 0.02 & \textbf{2.88 $\pm$ 0.05} & \textbf{3.06 $\pm$ 0.05} \\
\bottomrule
\end{tabular}
\label{table:main_result}
\end{table*}

\subsection{RQ1: PIPCFR vs. Baselines}

% As shown in Table~\ref{table:main_result}, for all datasets, PIPCFR performs the best in terms of estimating ITE and ATE. 
%Furthermore, PIPCFR achieves lower variance on ATE estimation than most baselines. 
%This shows that, by minimizing the objective function given in equations (\ref{obj:1})-(\ref{obj:3}), PIPCFR effectively reduces the ITE/ATE estimation error, achieving better long-term effect estimation than baseline methods, without the need to observe post-treatment variables in the target data. 
We present the results in Table~\ref{table:main_result}. PIPCFR consistently outperforms existing methods from five categories of prior techniques for ITE. 
Meta Learners show poor performance due to their two-staged regression approach. Missing outcomes imputed during the first stage create errors that propagate to the second stage, resulting in suboptimal ITE estimation. 
% Representation learners and PS-learners achieve better results than meta-learners through joint training, allowing effective information sharing across treatments, which shows better generalization ability. However, these methods cannot handle the variance introduced by post-treatment variables. 
Through joint training approaches, representation learners and PS-learners outperform meta-learners by enabling effective information sharing across different treatments. However, a key limitation of these methods is that they lack specific mechanisms to address the variance that emerges from post-treatment variables. 
% underperform because they rely on propensity scores, which are often not well calibrated in practice, especially with limited data.
Matching methods perform poorly as they depend on distance metrics to create neighboring samples for missing treatments, which can be unreliable when post-treatment variables introduce variance. 
Notably, even when post-treatment variables ($S$) are incorporated in the matching process (PairNet+S, PerfectMatch+S), these methods still underperform. This confirms our analysis in Section~\ref{sec:PIP} that directly using $S$ as input will introduce post-treatment bias. %This confirms that directly using $S$ as input can introduce post-treatment bias and can not boost performance, as discussed in section~\ref{sec:PIP}. 
In contrast to these baselines, PIPCFR demonstrates superior performance by effectively extracting useful information from post-treatment variables while simultaneously mitigating post-treatment bias and imputing accurate pesudo-outcomes for missing treatments. 
\subsection{RQ2: Influence of Post-treatment Variables} \label{sec:task_complexity}  \label{sec:timestep}

\paragraph{Impact of Exogenous Noise in $S$} 
To evaluate the impact of exogenous noise, we vary the noise scales $\epsilon_{u}=[1,2,3,4,5]$ as defined in Appendix~\ref{app:datasets}.  Figure~\ref{fig:Ablate_post} (left) demonstrates a clear pattern: as noise increases, all methods exhibit increasing PEHE error. Notably, PIPCFR consistently outperforms all baselines across different noise scales, demonstrating its robust performance in the presence of significant exogenous noise. More importantly, PIPCFR's performance advantage grows with increasing noise, indicating its superior capability in handling the uncertainty in post-treatment variables compared to existing approaches.

% \begin{wrapfigure}{r}{0.25\textwidth}
%   \vskip -0.2in
%   \includegraphics[width=0.25\textwidth]{figures/results_figure/n_new.pdf}
%   \vskip -0.10in
%   \caption{Varying $K$.}
%   \label{fig:varying_timestep}
%   \vskip -0.15in
% \end{wrapfigure}

% \paragraph{Impact of Temporally Accumulated Noises}
\paragraph{Impact of Long Temporal Dependency}
% post-treatment variables can be affected by exogenous noise that accumulates over time, making it more challenging to estimate long-term effects as the time step increases. 
% In real-world scenarios, post-treatment variables can be a sequence, in which the noise can accumulate over time, making it more challenging to estimate ITE as the time step increases.
In real-world scenarios, post-treatment variables can manifest as long sequences where noise accumulates over time, making ITE estimation increasingly challenging as the time horizon extends. 
% 在这个实验中，我们将post-treatment variables扩展成K步的序列变量，as defined in Appendix.
We consider post-treatment variables as a $K$-step sequence, as defined in the Appendix~\ref{app:datasets}. 
We experiment with a fixed noise scale while varying $K \in [1,100]$. The results are presented in Figure~\ref{fig:Ablate_post} (middle). 
As expected, ITE estimation errors increase for all methods as $K$ increases. 
% We find that PIPCFR performs best across different timesteps, and shows increasing performance gains compared to the baselines as the timesteps $K$ increases, demonstrating its superior ability to capture dependencies between post-treatment variables and outcomes. 
Notably, PIPCFR achieves lower PEHE error compared to the baselines. %consistently outperforms all baseline methods across different $K$. 
Furthermore, PIPCFR's performance advantage actually widens as $K$ increases, demonstrating its superior ability to capture long temporal dependencies between post-treatment variables and outcomes.

\begin{figure}[t]
    \centering
    \includegraphics[width=\linewidth]{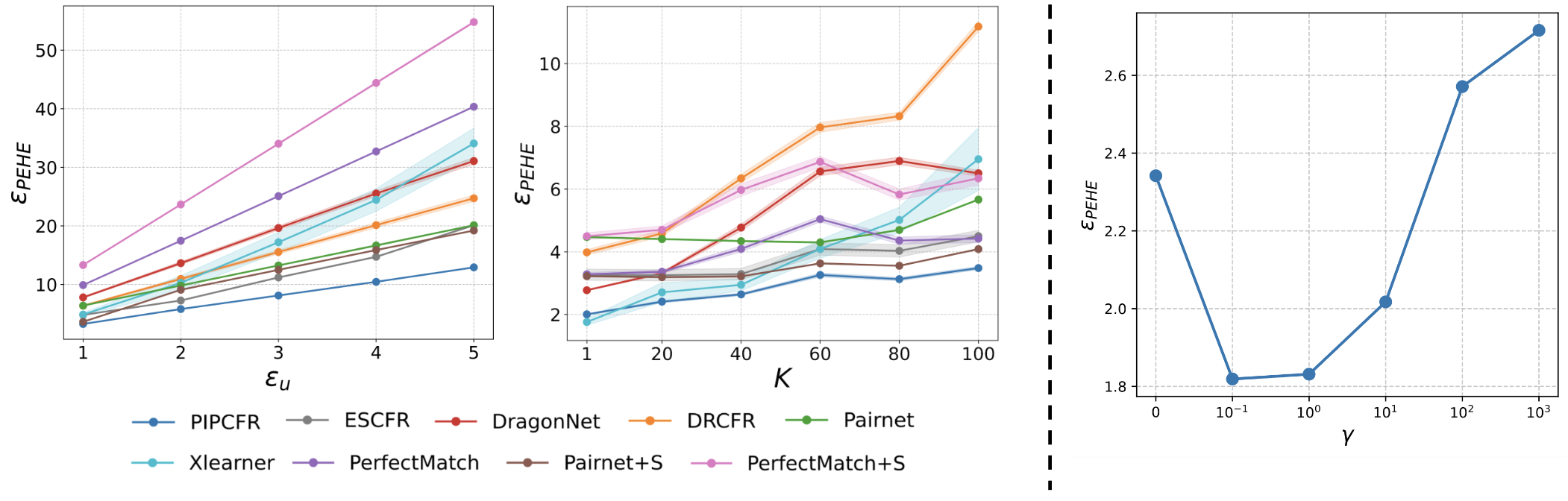}
    % \caption{Varying noise scales}
    % \includegraphics[width=\linewidth]{figures/results_figure/n.pdf}
    % \subcaption{Varying timestep}
    \caption{\textbf{Performance analysis of PIPCFR}. Left: Performance comparison with varying noise scales. Middle: Performance comparison with varying timestep. Right: Varying KL weight $\gamma$.}
    % \vskip -0.1in
    \label{fig:Ablate_post}
\end{figure}

\subsection{RQ3: Compatibility with Existing Methods}
\label{sec:combine}

In Section~\ref{sec:algorithm}, we employ CFRNet~\citep{shalit2017estimating} to minimize the generalization error of $q$. However, our approach is fundamentally flexible -- in principle, we can utilize any existing causal inference methods to minimize this generalization error, making PIPCFR inherently compatible with a wide range of existing ITE estimation techniques. 
To demonstrate this compatibility, we conduct experiments in which we replace the CFRNet method with several alternatives, including DRCFR~\citep{hassanpour2019learning}, DragonNet~\citep{shi2019adapting}, and ESCFR~\citep{wang2024optimal}. The results, presented in Table~\ref{table:compatibility}, clearly show that PIPCFR consistently enhances performance when integrated with these existing ITE estimation methods.
% In this experiment, we replace the CFR methods with other methods, including DRCFR, DragonNet and FlexTENet. The results, shown in Table~\ref{table:compatibility}, demonstrate that PIPCFR  consistently improves performance when combined with existing ITE estimation methods. %Please refer to Appendix~\ref{app:algorithm} for more details. 

% \redtext{(1) add visualization to compare with PairNet.}
% \redtext{(2) counterfactual prediction: post-treatment network vs PairNet match}

\begin{table*}[t!]
\centering
% \tiny
% \scriptsize
\footnotesize
\caption{RQ 3: Compatibility with existing methods. We combine several ITE methods with our approach and observe that our approach provides a significant performance improvement on $\epsilon_{\textup{PEHE out}}$.}
\begin{tabular}{ccccccc}
\toprule
& \multicolumn{2}{c}{DRCFR} & \multicolumn{2}{c}{DragonNet} & \multicolumn{2}{c}{ESCFR} \\ 
\cmidrule(lr){2-3}  \cmidrule(lr){4-5} \cmidrule(lr){6-7} 
& Baseline & +PIPCFR & Baseline & +PIPCFR & Baseline & +PIPCFR \\ 
\midrule
NEWS       & 0.92 $\pm$ 0.01 & \textbf{0.44 $\pm$ 0.00} & 0.61 $\pm$ 0.01 & \textbf{0.40 $\pm$ 0.00} & 1.78 $\pm$ 0.01 & \textbf{0.41 $\pm$ 0.00} \\  
\midrule
IHDP       & 3.00 $\pm$ 0.11 & \textbf{2.31 $\pm$ 0.01} & 3.61 $\pm$ 0.10 & \textbf{2.35 $\pm$ 0.02} & 2.95 $\pm$ 0.01 & \textbf{2.35 $\pm$ 0.02} \\ 
\midrule
% Synthetic  & 7.80 $\pm$ 0.02 & 3.47 $\pm$ 0.02 & 5.04 $\pm$ 0.03 & 3.44 $\pm$ 0.04 &  &  \\ 
Synthetic  & 7.80 $\pm$ 0.02 & \textbf{3.31 $\pm$ 0.02} & 5.04 $\pm$ 0.03 & \textbf{2.95 $\pm$ 0.03} & 4.09 $\pm$ 0.20 & \textbf{2.80 $\pm$ 0.05} \\ 
\bottomrule
\end{tabular}
% \vskip -0.15in
\label{table:compatibility}
\end{table*}

% \begin{tabular}{ccccccc}
% \toprule
% & \multicolumn{2}{c}{DRCFR} & \multicolumn{2}{c}{DragonNet} & \multicolumn{2}{c}{FlexTENet} \\ 
% \cmidrule(lr){2-3}  \cmidrule(lr){4-5} \cmidrule(lr){6-7} 
% & Baseline & +PIPCFR & Baseline & +PIPCFR & Baseline & +PIPCFR \\ 
% \midrule
% Synthetic  & 7.80 $\pm$ 0.02 & 3.47 $\pm$ 0.02 & 5.04 $\pm$ 0.03 & 3.44 $\pm$ 0.04 & 0.25 $\pm$ 0.00 & 3.44 $\pm$ 0.04 \\ 
% \midrule
% NEWS       & 0.92 $\pm$ 0.01 & 0.44 $\pm$ 0.00 & 0.61 $\pm$ 0.01 & 0.40 $\pm$ 0.00 & 0.66 $\pm$ 0.00 & 0.40 $\pm$ 0.00 \\  
% \midrule
% IHDP       & 3.00 $\pm$ 0.11 & 2.31 $\pm$ 0.01 & 3.61 $\pm$ 0.10 & 2.35 $\pm$ 0.02 & 3.40 $\pm$ 0.08 & 2.35 $\pm$ 0.02 \\ 
% \bottomrule
% \end{tabular}
% % \vskip -0.20in
% \label{table:compatibility}
% \end{table*}

\subsection{RQ4: Sensitivity Analysis}
\label{sec:ablation}

We examine the impact of KL loss weight $\gamma$ in (\ref{obj:L_kl}), as shown in Figure \ref{fig:Ablate_post} (right). The performance drop at $\gamma=0$ highlights the importance of conditional independence when learning representations of post-treatment variables. 
Large $\gamma$ values lead to performance degradation, likely because the KL divergence term dominates the optimization process. 
The model achieves optimal performance within a stable range of $\gamma \in [0.1, 1]$, showing that PIPCFR is not very sensitive to hyperparameter choice. %exhibits robust performance across a reasonable range of hyperparameter choice. 
This stability reduces the need for precise hyperparameter tuning, enhancing the practical applicability of our approach. 
%indicating that PIPCFR is robust to this hyperparameter choice.

\subsection{RQ5: Experiments on Real-world Data}

% \redtext{We further validate our model in a real-world PVP gaming scenario, specifically focusing on estimating the treatment effects of warm-up match interventions. To ensure a rigorous and ethical evaluation, this study is conducted entirely in an offline research setting. We construct the dataset from a comprehensive repository containing 3 million  spanning over two months, ensuring high temporal diversity and behavioral richness. From this repository, we randomly sampled 5,000 match records as our primary units of analysis. To establish the ITE ground truth in the absence of counterfactual outcomes, we performed statistical matching—specifically KNN and PSM—against the entire repository. The feature set comprises pre-treatment variables of the corresponding players (e.g., kill counts in the previous 10 matches) and post-treatment variables (e.g., kill counts in the subsequent 10 matches), with the outcome defined as the player's cumulative match count over the three days following the specific match. The $\epsilon_{\textup{PEHE in}}$ and $\epsilon_{\textup{PEHE out}}$ scores of different models are summarized in Table \ref{table:real_world_result}. The performance of PIPCFR is superior to the competitive baselines, as evidenced by the fact that PIPCFR achieves the lowest PEHE under both KNN-based and PSM-based ground truth estimations.}  

We evaluate our model on a product dataset comprising over 3 million samples collected from the online gaming platform. The pre-treatment variable $X$ contains 621 features that describe users' static profiles and their recent gaming histories. The post-treatment variable $S$ contains 320 features that describe users' performance and behavior after treatment. The outcome $Y$ is defined as the users' cumulative game rounds in the next 3 days. 
We establish ITE ground truth by performing matching methods (e.g. KNN and PSM) against the entire dataset. 
The $\epsilon_{\textup{PEHE in}}$ and $\epsilon_{\textup{PEHE out}}$ values of different models are summarized in Table \ref{table:real_world_result}. PIPCFR consistently outperforms competitive baselines, achieving the lowest PEHE under both KNN-based and PSM-based ground truth estimations.

% Product: We include a product dataset containing over 10
% million samples collected from the our online gaming platform. There are about 700 discrete and continuous features
% that describe users’ static profiles and their recent gaming
% histories. To minimize the influence of confounding factors in
% uplift modeling, we gather data from randomized controlled
% trials. In these trials, treatments are assigned randomly to ensure an even distribution of potential outcomes between the
% treatment and control groups. 

\begin{table*}[t!]
\centering
% \tiny
% \scriptsize
\footnotesize
\caption{RQ5: The performance of PIPCFR on real-world data compared to baselines evaluated using PEHE error. The table shows mean values and standard deviation across 50 runs. Overall, PIPCFR demonstrates superior performance among all methods.}
\begin{tabular}{@{}l|lcccccc@{}}
\toprule
\multicolumn{2}{c}{} & \multicolumn{2}{c}{KNN-Matched Ground Truth} & \multicolumn{2}{c}{PSM-Matched Ground Truth} \\ \cmidrule(lr){3-4} \cmidrule(lr){5-6}
\multicolumn{2}{c}{} & $\epsilon_{\textup{PEHE in}}\downarrow$ & $\epsilon_{\textup{PEHE out}}\downarrow$ & $\epsilon_{\textup{PEHE in}}\downarrow$ & $\epsilon_{\textup{PEHE out}}\downarrow$ \\\midrule
\multirow{1}{*}{\centering Tree Learners} & Causal Forest & 8.16 $\pm$ 0.03 & 8.12 $\pm$ 0.06 & 7.95 $\pm$ 0.02 & 7.85 $\pm$ 0.05 \\
\midrule
\multirow{3}{*}{\centering Meta Learners} & XLearner & 8.73 $\pm$ 0.06 & 9.22 $\pm$ 0.08 & 8.36 $\pm$ 0.05 & 8.81 $\pm$ 0.07 \\
& RLearner & 11.43 $\pm$ 0.20 & 11.78 $\pm$ 0.18 & 11.21 $\pm$ 0.16 & 11.47 $\pm$ 0.16 \\
& DRLearner & 55.27 $\pm$ 1.13 & 45.33 $\pm$ 1.59 & 54.58 $\pm$ 1.10 & 44.13 $\pm$ 1.24 \\
\midrule
\multirow{5}{*}{\centering Rep. Learners} & DRCFR & 7.94 $\pm$ 0.07 & 9.49 $\pm$ 0.08 & 8.42 $\pm$ 0.06 & 8.86 $\pm$ 0.04 \\
& ESCFR & 9.72 $\pm$ 0.04 & 9.84 $\pm$ 0.04 & 9.50 $\pm$ 0.01 & 9.68 $\pm$ 0.08 \\
& TARNet & 7.90 $\pm$ 0.02 & 7.85 $\pm$ 0.06 & 7.93 $\pm$ 0.02 & 7.86 $\pm$ 0.06 \\
& CFRNet(MMD) & 7.89 $\pm$ 0.03 & 7.81 $\pm$ 0.06 & 7.92 $\pm$ 0.03 & 7.87 $\pm$ 0.06 \\
& CFRNet(WASS) & 7.90 $\pm$ 0.07 & 7.84 $\pm$ 0.05 & 7.91 $\pm$ 0.03 & 7.86 $\pm$ 0.05 \\
\midrule
\multirow{1}{*}{\centering PS. Learner} & Dragonnet & 8.03 $\pm$ 0.03 & 7.95 $\pm$ 0.06 & 8.00 $\pm$ 0.03 & 7.93 $\pm$ 0.05 \\
\midrule
\multirow{4}{*}{\centering Matching} 
& PairNet & 9.55 $\pm$ 0.10 & 11.70 $\pm$ 0.11 & 9.23 $\pm$ 0.10 & 11.70 $\pm$ 0.11 \\
& PairNet+S & 9.40 $\pm$ 0.10 & 11.50 $\pm$ 0.11 & 9.40 $\pm$ 0.10 & 10.53 $\pm$ 0.08 \\
& PerfectMatch & 8.34 $\pm$ 0.02 & 8.27 $\pm$ 0.05 & 8.03 $\pm$ 0.01 & 8.22 $\pm$ 0.04 \\
& PerfectMatch+S & 8.34 $\pm$ 0.02 & 8.27 $\pm$ 0.05 & 8.12 $\pm$ 0.02 & 8.42 $\pm$ 0.02 \\
\midrule
\multirow{2}{*}{\centering Ours}  & \textbf{PIPCFR (MMD)} & \textbf{7.66 $\pm$ 0.03} & \textbf{7.69 $\pm$ 0.06} & \textbf{7.61 $\pm$ 0.02} & 7.65 $\pm$ 0.05 \\
& \textbf{PIPCFR (WASS)} & 7.66 $\pm$ 0.02 & 7.69 $\pm$ 0.06 & 7.69 $\pm$ 0.04 & \textbf{7.64 $\pm$ 0.06} \\
\bottomrule
\end{tabular}
\label{table:real_world_result}
\end{table*}

\section{Conclusion}

\vskip -0.1in

In this paper, we addressed a critical limitation in existing ITE estimation methods: the oversight in accounting for post-treatment variables that introduce significant variance in counterfactual predictions. We introduced PIPCFR, a novel approach that leverages post-treatment variables to impute more accurate pseudo-outcomes. Our theoretical analysis established a new bound for ITE risk that explicitly connects post-treatment variables to estimation accuracy. Experiments demonstrated that PIPCFR consistently outperforms state-of-the-art methods from various categories. We showed that PIPCFR is compatible with existing ITE methods. 
%, including meta-learners, representation-learning methods, propensity-score learners, and matching methods. 
% The performance gains were particularly pronounced in scenarios with high external noise and complex temporal dependencies -- precisely the challenging real-world conditions where existing methods struggle most. Furthermore, we showed that PIPCFR is compatible with and enhances existing ITE estimation techniques, offering a flexible framework that can be integrated with various approaches.

Our work opens new directions for causal inference research by highlighting the importance of properly handling post-treatment variables. 
%Rather than ignoring these variables or imposing restrictive assumptions like the front-door criterion, PIPCFR learns effective representations that preserve informative components while mitigating bias. 
% Despite its advantages, this work has limitations, particularly its reliance on standard causal inference assumptions such as Unconfoundedness, which may not always hold in real-world scenarios. 
While promising, this work has limitations, particularly its reliance on standard causal inference assumptions such as unconfoundedness. 
In the future, several interesting research directions lie ahead. (1) Incorporate PIPCFR into more sophisticated models that account for hidden confounders. (2) Exploring advanced network architectures to enhance the representation learning of post-treatment variables.
%This balanced approach addresses the fundamental tension between reducing variance and avoiding post-treatment bias, advancing the field toward more accurate and robust treatment effect estimation in complex observational settings.

% This work studies long-term effect estimation. We aim to tackle two key challenges: (1) violation of the strong assumptions about short-term variables and (2) unavailability of short-term variables during inference. 
% We propose a novel approach, HCR, to address these challenges. To the best of our knowledge, this is the first work to provide a theoretical bound for the ITE estimation error in the context of long-term effect estimation. 
% Through extensive evaluation, we demonstrate that HCR outperforms state-of-the-art methods in estimating long-term effects. 

% In the future, several interesting directions lie ahead. 
% (1) Extending the framework to handle distribution shifts between source and target data by relaxing the comparability assumption. 
% (2) Incorporating our approach into more sophisticated models that account for hidden confounders. 
% (3) Exploring advanced network architectures to enhance hindsight representation learning from short-term variables. 

\clearpage

\bibliography{iclr2026_conference}
\bibliographystyle{iclr2026_conference}

\appendix
\section{Proofs}\label{app:proof}

\subsection{Definition}

\begin{definition}
    The error residual
    \begin{equation}
    \begin{aligned}
        r_t(x)&=\mf_t(x)-\truemf_t(x),\quad\tilde r_t(x,s)=\hsf_t(x,\psi_\eta(s))-\truemf_t(x),\\
        \ddot r_t(x,s)&=\mf_t(x)-\hsf_t(x,\psi_\eta(s)),\quad\overline r_t(x,\phi)=\mf_t(x)-\hsf_t(x,\phi).
    \end{aligned}
    \end{equation}
\end{definition}

\begin{definition}
    The \textit{counterfactual errors} are:
    \begin{equation}
        \begin{aligned}
            \epsilon_{CF}^t =& \int_{x} r_t(x)^2 p_{1-t}(x) dx, \quad \epsilon_{CF} = \sum_t u_{1-t} \epsilon_{CF}^t,\\
            \overline\epsilon_{CF}^t =& \int_{x} \overline r_t(x,s)^2 p_{1-t}(s,x) dx, \quad \overline\epsilon_{CF} = \sum_t u_{1-t} \overline\epsilon_{CF}^t,\\
            \tilde\epsilon_{CF}^t =& \int_{x} \tilde r_t(x,s)^2 p_{1-t}(s,x) dx, \quad \tilde\epsilon_{CF} = \sum_t u_{1-t} \tilde\epsilon_{CF}^t,\\
            \ddot\epsilon_{CF}^t =& \int_{x} \ddot r_t(x,s)^2 p_{1-t}(s,x) dx, \quad \ddot\epsilon_{CF} = \sum_t u_{1-t} \ddot\epsilon_{CF}^t.\\
        \end{aligned}
    \end{equation}
\end{definition}

\subsection{Proof of Theorem~\ref{thm:ite_bound}}
\label{app:thm_ite_bound_proof}

\begin{theorem*}
    % Assume that exists $\delta>0$ such that $\epsilon_{CF}\leq\ddot\epsilon_{CF}+\delta\tilde{\epsilon}_{CF}$, we have
    Assume that exists $\delta>0$ such that $\mathbb{E}_{x,t,s\sim p(X,T,S)}[r_t(x)\ddot r_{1-t}(x,s)]\leq\delta\tilde{\epsilon}_{CF}$, we have
    \begin{equation}
        \small
        \begin{aligned}
        \epsilon_{ITE}\leq 
        \epsilon_{PIP}
        +
        \mathbb E_{x\sim p(X)}[B\cdot IPM_G(p_0(\phi\mid x),p_1(\phi\mid x))]
        +
        (2\delta+1)\tilde{\epsilon}_{CF}
        +
        2\sum_{t=0,1}u_t\left[\sqrt{\epsilon^t_FQ_t(\psi_\eta,\hsf)}\right],
        \end{aligned}
    \end{equation}

where $Q_t(\psi_\eta,\hsf)=\int_{\mathcal X}\int_{\mathcal S} [\truemf(x)-\hsf(x,1-t,\psi_\eta(s))]^2p_{t}(s,x)dx$ can be regarded as evaluating how well predictor $\hsf$ models (or represents) counterfactual outcome information, where this modeling is facilitated by the introduction of $s$. Additionally, there exists a constant $B$ such that $\frac{1}{B}\overline r_t(x,\phi)\in G$ where $\overline r_t(x,\phi)=\mf_t(x)-\hsf_t(x,\phi)$.
    
\end{theorem*}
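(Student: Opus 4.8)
The plan is to start from the exact ITE-risk decomposition given in the Proposition, subtract the definition of $\epsilon_{PIP}$, and show that the three leftover pieces are precisely the Post-Treatment-Bias term, the generalization term $(2\delta+1)\tilde\epsilon_{CF}$, and the $\sqrt{\cdot}$ term. The one algebraic fact driving everything is the pointwise identity $r_t(x)=\overline r_t(x,\phi)+\tilde r_t(x,\psi_\eta(s))$ for $\phi=\psi_\eta(s)$, together with the observation that $r_t(x)^2$ is $s$-free, so that any factual or counterfactual integral of $r_t^2$ may be rewritten against the joint laws $p_t(\phi,x)$ or $p_{1-t}(\phi,x)$ at no cost.

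First I would subtract. By the Proposition, $\epsilon_{ITE}=\epsilon_F+\epsilon_{CF}-2\mathbb{E}_{p(x,t)}[r_t r_{1-t}]$; the $\epsilon_F$ term cancels against the one in $\epsilon_{PIP}$, leaving $\epsilon_{ITE}-\epsilon_{PIP}=(\epsilon_{CF}-\ddot\epsilon_{CF})+2\big(\mathbb{E}_{p(x,t,s)}[r_t\overline r_{1-t}]-\mathbb{E}_{p(x,t)}[r_t r_{1-t}]\big)$. Since $r_t r_{1-t}$ is $s$-free it may be taken over $p(x,t,s)$, and $r_{1-t}-\overline r_{1-t}=\tilde r_{1-t}$ collapses the cross-term gap to $-2\mathbb{E}_{p(x,t,s)}[r_t(x)\tilde r_{1-t}(x,\phi)]$. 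Splitting over $t$ (which produces the marginals $u_t$) and applying Cauchy--Schwarz against $p_t(\phi,x)$, the first factor is $\sqrt{\epsilon_F^t}$ and the second is $\sqrt{\mathbb{E}_{p_t(\phi,x)}[\tilde r_{1-t}^2]}=\sqrt{Q_t(\psi_\eta,\hsf)}$, giving exactly the term $2\sum_{t}u_t\sqrt{\epsilon_F^t\,Q_t(\psi_\eta,\hsf)}$.

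It remains to bound $\epsilon_{CF}-\ddot\epsilon_{CF}$. Writing $\epsilon_{CF}^t=\mathbb{E}_{p_{1-t}(\phi,x)}[r_t^2]$ and factoring $r_t^2-\overline r_t^2=(r_t+\overline r_t)\tilde r_t$, one gets $\epsilon_{CF}^t-\ddot\epsilon_{CF}^t=\tilde\epsilon_{CF}^t+2\mathbb{E}_{p_{1-t}(\phi,x)}[\overline r_t\tilde r_t]$, hence $\epsilon_{CF}-\ddot\epsilon_{CF}=\tilde\epsilon_{CF}+2\sum_t u_{1-t}\mathbb{E}_{p_{1-t}(\phi,x)}[\overline r_t\tilde r_t]$; the $\tilde\epsilon_{CF}$ here supplies the ``$+1$'' of the coefficient $(2\delta+1)$. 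For the remaining cross term I would write $\mathbb{E}_{p_{1-t}(\phi,x)}[\overline r_t\tilde r_t]$ as $\mathbb{E}_{x\sim p_{1-t}(x)}\big[\int\overline r_t(x,\phi)\tilde r_t(x,\phi)\,p_{1-t}(\phi\mid x)\,d\phi\big]$ and add and subtract the same inner integral against the factual conditional $p_t(\phi\mid x)$. The shift piece is the IPM term: since $\tfrac{1}{B}\overline r_t(x,\cdot)\in G$, the difference of $\phi$-expectations of $\overline r_t$ is at most $B\cdot IPM_G(p_0(\phi\mid x),p_1(\phi\mid x))$ (folding a uniform bound on $\tilde r_t$ and the factor $1/u_{1-t}$ from $u_{1-t}p_{1-t}(x)\le p(x)$ into the constant $B$), and averaging over $x\sim p(x)$ produces the Post-Treatment-Bias term. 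The remaining factual piece, after the $t\leftrightarrow 1-t$ relabeling and one further use of $r=\overline r+\tilde r$, is matched to the hypothesis $\mathbb{E}_{p(x,t,s)}[r_t\overline r_{1-t}]\le\delta\tilde\epsilon_{CF}$, which linearly absorbs it into $2\delta\tilde\epsilon_{CF}$. Summing the three estimates gives (\ref{eq:ite_bound}).

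The main obstacle is this last step: producing the $IPM_G$ factor with exactly $\overline r_t$ (not $\tilde r_t$, and not a product of residuals) sitting inside the unit ball $G$, while carefully tracking which law each factor is integrated against (factual $p_t$ vs.\ counterfactual $p_{1-t}$, conditional $p_t(\phi\mid x)$ vs.\ joint $p_t(\phi,x)$) and checking that the residual factual cross term genuinely falls under the $\delta$-hypothesis after relabeling treatments. Everything else is routine manipulation with $r=\overline r+\tilde r$, Cauchy--Schwarz, and the weight inequality $u_{1-t}p_{1-t}(x)\le p(x)$.
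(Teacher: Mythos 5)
Your skeleton is close to the paper's: you cancel $\epsilon_F$, reduce the cross-term gap to $-2\mathbb{E}_{p(x,t,s)}[r_t(x)\tilde r_{1-t}(x,s)]$ and bound it by Cauchy--Schwarz to get $2\sum_t u_t\sqrt{\epsilon_F^t Q_t}$ (this is exactly the paper's $H_3,H_4$ step), and you split $\epsilon_{CF}-\ddot\epsilon_{CF}$ via $r_t=\overline r_t+\tilde r_t$ into $\tilde\epsilon_{CF}$ plus a cross term. The genuine gap is in your final absorption step. After adding and subtracting the inner integral against $p_t(\phi\mid x)$, your leftover is $2\sum_t u_{1-t}\int p_{1-t}(x)\int \overline r_t(x,\phi)\,\tilde r_t(x,\phi)\,p_t(\phi\mid x)\,d\phi\,dx$; writing $\tilde r_t=r_t-\overline r_t$ its integrand is $r_t(x)\overline r_t(x,\phi)-\overline r_t(x,\phi)^2$, i.e.\ a \emph{same-index} product $r_t\overline r_t$ under the \emph{mixed} measure $u_{1-t}\,p_{1-t}(x)\,p_t(\phi\mid x)$. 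The hypothesis controls the \emph{opposite-index} product $r_t\overline r_{1-t}$ under the \emph{factual joint} $u_t\,p_t(x)\,p_t(\phi\mid x)$. Relabeling $t\leftrightarrow 1-t$ flips both residual indices and both measures simultaneously, so it can never turn one expression into the other; the quantity you need to absorb into $2\delta\tilde\epsilon_{CF}$ is simply not the quantity the $\delta$-assumption bounds, and this step fails as stated. (The paper arranges the split differently: it writes $r_t(x)^2=\int_{\mathcal S}(\ddot r_t+\tilde r_t)^2 p_t(s\mid x)\,ds$, extracts the IPM from the resulting $\overline r_t^2$ difference-of-conditionals term, and invokes the $\delta$-assumption on the remaining block $\int\tilde r_t^2 p_t(s\mid x)\,ds+2\int\ddot r_t\tilde r_t\,p_t(s\mid x)\,ds$.)

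A second, related issue is your IPM witness. Your difference-of-conditionals term carries the product $\overline r_t(x,\cdot)\tilde r_t(x,\cdot)$, so you need that product (not $\overline r_t$ alone) to lie in a scaled unit ball of $G$; ``folding a uniform bound on $\tilde r_t$ into $B$'' presupposes a sup-norm bound on $\tilde r_t$ that is nowhere assumed, plus closure of the RKHS ball under multiplication by that function. The paper's extraction puts $\overline r_t(x,\cdot)^2$ inside the IPM (already a mild mismatch with its stated condition $\frac1B\overline r_t\in G$), but your version needs strictly more unstated assumptions. To make your route work you would either have to redo the split so that the $\delta$-assumption is applied to the cross terms it actually (even if loosely) covers, or explicitly strengthen the hypotheses (boundedness of $\tilde r_t$ and membership of the relevant products in $G$); as written, neither the $(2\delta+1)\tilde\epsilon_{CF}$ term nor the Post-Treatment-Bias term is established.
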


\begin{proof}

The decompositin of ITE risk is:

\begin{equation}
    \begin{aligned}
        \epsilon_{ITE}=&\sum_t u_t\int_{\mathcal{X}}(r_1(x)-r_0(x))^2 dp_t(x)\\
        =&u_0\int_{\mathcal{X}}[r_1(x)^2+r_0(x)^2-2r_1(x)r_0(x)]p_0(x)dx\\
        &+u_1\int_{\mathcal{X}}[r_1(x)^2+r_0(x)^2-2r_1(x)r_0(x)]p_1(x)dx.
    \end{aligned}
\end{equation}

The decompositin of our PIP loss is:

\begin{equation}
    \begin{aligned}
        \epsilon_{PIP}=&\sum_t u_t\int_{\mathcal{X}}\int_{\mathcal{S}}[r_t(x)-\ddot r_{1-t}(x,s)]^2d p_t(s\mid x)d p_t(x)\\
        =&u_0\int_{\mathcal{X}} r_0(x)^2ds+u_0\int_{\mathcal{X}}\int_{\mathcal{S}}\ddot r_1(x,s)^2p_0(s\mid x) ds-2u_0\int_{\mathcal{X}}\int_{\mathcal{S}} r_0(x)\ddot r_1(x,s)p_0(s\mid x)ds p_0(x)dx\\
        &+u_1\int_{\mathcal{X}}r_1(x)^2 +u_1\int_{\mathcal{X}}\int_{\mathcal{S}}\ddot r_0(x,s)^2p_1(s\mid x) ds-2u_1\int_{\mathcal{X}}\int_{\mathcal{S}} r_1(x)\ddot r_0(x,s)p_1(s\mid x)ds p_1(x)dx.
    \end{aligned}
\end{equation}

The difference between the ITE risk and our PIP loss is:

\begin{equation}
    \begin{aligned}
        &\epsilon_{ITE}-\epsilon_{PIP}\\
        =&u_0\int_{\mathcal{X}}[r_1(x)^2+r_0(x)^2-2r_1(x)r_0(x)]p_0(x)dx\\
        &+u_1\int_{\mathcal{X}}[r_1(x)^2+r_0(x)^2-2r_1(x)r_0(x)]p_1(x)dx\\
        &-\bigg\{u_0\int_{\mathcal{X}} r_0(x)^2ds+u_0\int_{\mathcal{X}}\int_{\mathcal{S}}\ddot r_1(x,s)^2p_0(s\mid x) ds-2u_0\int_{\mathcal{X}}\int_{\mathcal{S}} r_0(x)\ddot r_1(x,s)p_0(s\mid x)ds p_0(x)dx\\
        &+u_1\int_{\mathcal{X}}r_1(x)^2 +u_1\int_{\mathcal{X}}\int_{\mathcal{S}}\ddot r_0(x,s)^2p_1(s\mid x) ds-2u_1\int_{\mathcal{X}}\int_{\mathcal{S}} r_1(x)\ddot r_0(x,s)p_1(s\mid x)ds p_1(x)dx\bigg\}\\
        =&\underbrace{u_0\int_{\mathcal{X}}[r_1(x)^2-\int_{\mathcal{S}}\ddot r_1(x,s)^2p_0(s\mid x) ds]p_0(x)dx}_{H_1}\\
        &+\underbrace{u_1\int_{\mathcal{X}}[r_0(x)^2-\int_{\mathcal{S}}\ddot r_0(x,s)^2p_1(s\mid x) ds] p_1(x)dx}_{H_2}\\
        &+\underbrace{2u_0\int_{\mathcal{X}}[\int_{\mathcal{S}} r_0(x)\ddot r_1(x,s)p_0(s\mid x)ds-r_1(x)r_0(x)] p_0(x)dx}_{H_3}\\
        &+\underbrace{2u_1\int_{\mathcal{X}}[\int_{\mathcal{S}} r_1(x)\ddot r_0(x,s)p_1(s\mid x)ds-r_1(x)r_0(x)]p_1(x)dx}_{H_4}\\
    \end{aligned}
\end{equation}

In the following, we derive $H_1$, $H_2$, $H_3$ and $H_4$ respectively.

\begin{equation}
    \begin{aligned}
        H_1=&u_0\int_{\mathcal{X}}[\int_{\mathcal{S}}(\ddot r_1(x,s)+\tilde r_1(x,s))^2p_1(s\mid x)ds-\int_{\mathcal{S}}\ddot r_1(x,s)^2p_0(s\mid x) ds]p_0(x)dx\\
        =&u_0\int_{\mathcal{X}}[\int_{\mathcal{S}}[\ddot r_1(x,s)^2+\tilde r_1(x,s)^2+2\ddot r_1(x,s)\tilde r_1(x,s)]p_1(s\mid x)ds-\int_{\mathcal{S}}\ddot r_1(x,s)^2p_0(s\mid x) ds]p_0(x)dx\\
        =&u_0\int_{\mathcal{X}}[\int_{\mathcal{S}}\ddot r_1(x,s)^2(p_1(s\mid x)-p_0(s\mid x))ds+\int_{\mathcal{S}}\tilde r_1(x,s)^2 p_1(s\mid x)ds\\
        &+2\int_{\mathcal{S}}\ddot r_1(x,s)\tilde r_1(x,s)p_1(s\mid x)ds]p_0(x)dx\\
        \leq&u_0\int_{\mathcal{X}}[\int_{\mathcal{S}}\ddot r_1(x,s)^2(p_1(s\mid x)-p_0(s\mid x))ds+(2\delta+1)\int_{\mathcal{S}}\tilde r_1(x,s)^2 p_0(s\mid x)ds]p_0(x)dx\\
        =&u_0\int_{\mathcal{X}}[\int_{\Phi}\overline r_1(x,\phi)^2(p_1(\phi\mid x)-p_0(\phi\mid x))ds+(2\delta+1)\int_{\mathcal{S}}\tilde r_1(x,s)^2 p_0(s\mid x)ds]p_0(x)dx\\
    \end{aligned}
\end{equation}

$$
H_2\leq u_1\int_{\mathcal{X}}[\int_{\mathcal{S}} \overline r_0(x,\phi)^2(p_0(\phi\mid x)-p_1(\phi\mid x))ds+(2\delta+1)\int_{\mathcal{S}}\tilde r_0(x,s)^2 p_1(s\mid x)ds]p_1(x)dx
$$

\begin{equation}
    \begin{aligned}
        H_1+H_2\leq&\mathbb E_x[B\cdot IPM_G(p_0(\phi\mid x),p_1(\phi\mid x))]+(2\delta+1)\tilde\epsilon_{CF}
    \end{aligned}
\end{equation}

\begin{equation}
    \begin{aligned}
H_3=&2u_0\int_{\mathcal{X}}[\int_{\mathcal{S}} r_0(x)\ddot r_1(x,s)p_0(s\mid x)ds-r_1(x)r_0(x)] p_0(x)dx\\
=&2u_0\int_{\mathcal{X}}[\int_{\mathcal{S}} r_0(x)\ddot r_1(x,s)p_0(s\mid x)ds-\int_sr_1(x)r_0(x)p_0(s\mid x)ds]p_0(x)dx\\
=&2u_0\int_{\mathcal{X}}[\int_s r_0(x)[\ddot r_1(x,s)-r_1(x)]p_0(s\mid x)ds]p_0(x)dx\\
=&2u_0\int_{\mathcal{X}}[\int_{\mathcal{S}} r_0(x)[\truemf_1(x)-\hsf_1(x,\psi_\eta(s))p_0(s\mid x)ds]p_0(x)dx\\
=&2u_0\sqrt{\left|\int_{\mathcal{X}}[\int_{\mathcal{S}} r_0(x)[\truemf_1(x)-\hsf_1(x,\psi_\eta(s))p_0(s\mid x)ds]p_0(x)dx\right|^2}\\
\leq&2u_0\sqrt{\int_{\mathcal{X}}r_0(x)^2p_0(x)dx\int_{\mathcal{X}}\int_{\mathcal{S}} [\truemf_1(x)-\hsf_1(x,\psi_\eta(s))]^2p_0(s\mid x)dsp_0(x)dx}\\
=&2u_0\sqrt{\epsilon^0_FQ_0(\psi_\eta,\hsf)}
\end{aligned}
\end{equation}

\begin{equation}
    \begin{aligned}
H_4\leq 2u_1\sqrt{\epsilon^1_FQ_1(\psi_\eta,\hsf)}
\end{aligned}
\end{equation}

Therefore, the bound of ITE risk is:

\begin{equation}
    \begin{aligned}
% \epsilon_{ITE}\leq \mathcal{L}_1+\mathbb E_x[B\cdot IPM_G(p_0(\phi\mid x),p_1(\phi\mid x))]+\delta^2\tilde\epsilon_{CF}+2\sum_{t=0,1}u_t\left[\sqrt{\epsilon^t_FQ_t(x,s,\psi,f_{\tilde{\omega}})}\right]\\
\epsilon_{ITE}\leq\epsilon_{PIP} +\mathbb E_{x\sim p(X)}[B\cdot IPM_G(p_0(\phi\mid x),p_1(\phi\mid x))]+(2\delta+1)\tilde{\epsilon}_{CF}+2\sum_{t=0,1}u_t\left[\sqrt{\epsilon^t_FQ_t(\psi_\eta,\hsf)}\right].
\end{aligned}
\end{equation}
    
\end{proof}

\subsection{Proof of Proposition~\ref{prp:kl_ipm}}
\label{app:prp_kl_ipm_proof}

\begin{proposition*}[Relation between IPM and KL Divergence]\label{prp:kl_ipm}
Let $G$ be the family of norm-1 functions in a Reproducing Kernel Hilbert Space (RKHS). Assume $G$ is generated by a normalized kernel. The Integral Probability Metric (IPM) between two distributions $p_0(\phi\mid x)$ and $p_1(\phi\mid x))$ is bounded by their conditional KL divergence as follows:

\begin{equation*}
    IPM_G(p_0(\phi\mid x),p_1(\phi\mid x))\leq\sqrt{\frac{2}{\pi_0\pi_1} \mathbb{E}_{\phi \sim p(\phi|x)}\big[KL(p(t\mid x,\phi)\|p(t\mid x))\big]},
\end{equation*}

where $\pi_0=p(t=0\mid x),\pi_1=p(t=1\mid x)$. Specifically, if $\mathbb E_{s,x\sim p(s,x)}[KL(p(t\mid x, \phi)\| p(t\mid x))]=0$, then $\mathbb{E}_{x\sim p(x)}[IPM_G(p_0(\phi\mid x),p_1(\phi\mid x))]=0$.
\end{proposition*}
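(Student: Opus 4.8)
The plan is to pass from the IPM to a total-variation distance, then relate that total-variation distance to the conditional mutual information $I(t;\phi\mid x)$ (which is exactly the right-hand KL expectation), using the mixture distribution $p(\phi\mid x)$ and Pinsker's inequality as the bridge. \textbf{From IPM to total variation:} since $G$ is the unit ball of an RKHS $\mathcal H$ whose kernel $k$ is normalized so that $k(\phi,\phi)=1$, the reproducing property and Cauchy--Schwarz give $|f(\phi)|=|\langle f,k(\phi,\cdot)\rangle_{\mathcal H}|\le\|f\|_{\mathcal H}\sqrt{k(\phi,\phi)}\le1$ for all $f\in G$, so $G\subseteq\{f:\|f\|_\infty\le1\}$. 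Writing $\mathrm{TV}(P,Q)=\tfrac12\int|dP-dQ|$, it follows that
\[
IPM_G(p_0(\phi\mid x),p_1(\phi\mid x))\le\sup_{\|f\|_\infty\le1}\Bigl|\,\int f\,\bigl(dp_0(\phi\mid x)-dp_1(\phi\mid x)\bigr)\Bigr|=2\,\mathrm{TV}\bigl(p_0(\phi\mid x),p_1(\phi\mid x)\bigr).
\]
(The same bound follows from the kernel mean embedding, via $\|\mathbb E_{\phi\sim p_0}[k(\phi,\cdot)]-\mathbb E_{\phi\sim p_1}[k(\phi,\cdot)]\|_{\mathcal H}\le\int\|k(\phi,\cdot)\|_{\mathcal H}\,|dp_0-dp_1|$.)

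Next I would identify the right-hand side of the claim with an information-theoretic quantity. Expanding definitions, $\mathbb E_{\phi\sim p(\phi\mid x)}[KL(p(t\mid x,\phi)\|p(t\mid x))]=I(t;\phi\mid x)$, the conditional mutual information of $t$ and $\phi$ given $x$. Using the other factorization of $p(t,\phi\mid x)$, this also equals $\pi_0\,KL(p_0(\phi\mid x)\|p(\phi\mid x))+\pi_1\,KL(p_1(\phi\mid x)\|p(\phi\mid x))$, where $p(\phi\mid x)=\pi_0 p_0(\phi\mid x)+\pi_1 p_1(\phi\mid x)$ is the $(\pi_0,\pi_1)$-mixture. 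The crux is then to bound $\mathrm{TV}(p_0(\phi\mid x),p_1(\phi\mid x))$ by this quantity.

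\textbf{Total variation via the mixture:} because $p(\phi\mid x)$ is the $\pi$-mixture, one has the exact identities $\mathrm{TV}(p_0,p)=\pi_1\,\mathrm{TV}(p_0,p_1)$ and $\mathrm{TV}(p_1,p)=\pi_0\,\mathrm{TV}(p_0,p_1)$ (abbreviating the conditional laws). Pinsker's inequality $\mathrm{TV}(P,Q)^2\le\tfrac12 KL(P\|Q)$ then gives $2\pi_1^2\,\mathrm{TV}(p_0,p_1)^2\le KL(p_0\|p)$ and $2\pi_0^2\,\mathrm{TV}(p_0,p_1)^2\le KL(p_1\|p)$; taking the $(\pi_0,\pi_1)$-weighted sum and using $\pi_0+\pi_1=1$ yields $2\pi_0\pi_1\,\mathrm{TV}(p_0,p_1)^2\le\pi_0 KL(p_0\|p)+\pi_1 KL(p_1\|p)=I(t;\phi\mid x)$. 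Chaining with the first step,
\[
IPM_G(p_0(\phi\mid x),p_1(\phi\mid x))\le2\,\mathrm{TV}(p_0,p_1)\le\sqrt{\tfrac{2}{\pi_0\pi_1}\,I(t;\phi\mid x)},
\]
which is the claimed inequality. For the final assertion: if $\mathbb E_{s,x}[KL(p(t\mid x,\phi)\|p(t\mid x))]=0$ then, the integrand being nonnegative, $\mathbb E_{\phi\sim p(\phi\mid x)}[KL(\cdot\|\cdot)]=0$ for $p(x)$-almost every $x$, and the inequality just proved forces $IPM_G(p_0(\phi\mid x),p_1(\phi\mid x))=0$ a.e., so its $x$-expectation vanishes.

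\textbf{Main obstacle.} The only step that is not routine is the total-variation bound: one must route through the mixture $p(\phi\mid x)$ — for which the total-variation distances to $p_0$ and $p_1$ scale exactly by $\pi_1$ and $\pi_0$ — and then take precisely the $(\pi_0,\pi_1)$-weighted average of the two Pinsker bounds so that the right-hand side collapses to the mutual information. Care is also needed with normalization conventions so that the ``$2$'' from the first step and the ``$\tfrac12$'' in Pinsker combine to the stated constant $\tfrac{2}{\pi_0\pi_1}$.
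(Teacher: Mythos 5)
Your proof is correct, and it reaches the stated constant by a genuinely different route for the central inequality. You share the paper's first and last steps: identifying $\mathbb{E}_{\phi\sim p(\phi\mid x)}\big[KL(p(t\mid x,\phi)\|p(t\mid x))\big]$ with the conditional mutual information, i.e.\ the $(\pi_0,\pi_1)$-weighted Jensen--Shannon divergence $\pi_0\,KL(p_0\|p)+\pi_1\,KL(p_1\|p)$ with $p=\pi_0p_0+\pi_1p_1$ (the paper does this via an entropy decomposition, you via the two factorizations of $p(t,\phi\mid x)$), and the zero-KL $\Rightarrow$ zero-IPM conclusion. Where you diverge is the key step: the paper imports Lemma~5 and Theorem~7 of \citet{hoyos2023representation} to get $JS_\pi(p_1\|p_0)\ge\frac{\pi_0\pi_1}{2}\,IPM^2(p_0,p_1)$ for normalized kernels, whereas you prove it from scratch by (i) using $k(\phi,\phi)=1$ and Cauchy--Schwarz to show the RKHS unit ball sits inside the sup-norm unit ball, hence $IPM_G\le 2\,\mathrm{TV}$, and (ii) the exact mixture identities $\mathrm{TV}(p_0,p)=\pi_1\mathrm{TV}(p_0,p_1)$, $\mathrm{TV}(p_1,p)=\pi_0\mathrm{TV}(p_0,p_1)$ combined with Pinsker and the $(\pi_0,\pi_1)$-weighted average to obtain $JS_\pi\ge 2\pi_0\pi_1\,\mathrm{TV}(p_0,p_1)^2$; chaining gives exactly $IPM_G\le\sqrt{\tfrac{2}{\pi_0\pi_1}JS_\pi}$, and your constants check out. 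The trade-off: your argument is elementary and self-contained (no external kernel-JS lemma, and it in fact bounds the larger quantity $2\,\mathrm{TV}$, so it applies verbatim to any function class contained in the sup-norm unit ball), while the paper's cited route stays native to the RKHS/MMD geometry, which is the form that can be sharper than a detour through total variation in general; for the normalized-kernel setting and the constant claimed here, the two give the same bound. Your handling of the final assertion is also fine and slightly leaner than the paper's (you invoke the just-proved inequality pointwise in $x$ rather than re-deriving $p_0(\phi\mid x)=p_1(\phi\mid x)$ from conditional independence).
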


\begin{proof}
    In the following, we denote $\pi_0=p(t=0\mid x),\pi_1=p(t=1\mid x),M=\pi_1 p_1 + \pi_0 p_0, p_1=p_1(\phi\mid x), p_0=p_0(\phi\mid x)$. And $JS_{\pi}(p_1 || p_0)$ denote the Jensen-Shannon Divergence as follows:
    
    \begin{equation*}
        JS_{\pi}(p_1 \Vert p_0) = \sum_{i=0}^{1} \pi_i KL(p_i \Vert M).
    \end{equation*}
    
    Our proof includes three steps:
    \begin{enumerate}
        \item prove $\mathbb E_\phi[KL[p(t\mid x,\phi)\| p(t\mid x)]]=JS_{\pi}(p_1 || p_0)$;
        \item prove $IPM_G(p_0(\phi\mid x),p_1(\phi\mid x))\leq\sqrt{8 \mathbb{E}_{\phi \sim p(\phi|x)}\big[KL(p(t\mid x,\phi)\|p(t\mid x))\big]}$;
        \item prove $\mathbb E_{s,x\sim p(S,X)}[KL(p(t\mid x, \phi)\| p(t\mid x))]=0\Rightarrow\mathbb{E}_{x\sim p(X)}[IPM_G(p_0(\phi\mid x),p_1(\phi\mid x))]=0$.
    \end{enumerate}

    The detailed steps are as follows:

    \textbf{Step 1: Prove $\mathbb E_\phi[KL[p(t\mid x,\phi)\| p(t\mid x)]]=JS(p_1 || p_0)$:}

    With the following equations,

    \begin{equation*}
    \begin{aligned}
      & \mathbb{E}_{\phi \sim p(\phi|x)} \big[ \text{KL}(p(t|x,\phi) \ || \ p(t|x)) \big] \\
      =& \ I(t; \phi | x) \\
      =& \ H(\phi|x) - H(\phi|x, t) \\
      =& \ H(p(\phi|x)) - \mathbb{E}_{t \sim p(t|x)}[H(\phi|x,t)] \\
      =& \ H(M) - \big[ \pi_1 H(p_1) + \pi_0 H(p_0) \big] \\
      \\
      & JS(p_1 || p_0) \\
      =& \ \pi_1 \text{KL}(p_1 || M) + \pi_0 \text{KL}(p_0 || M) \\
      =& \ \pi_1(-H(p_1) - \mathbb{E}_{p_1}[\log M]) + \pi_0(-H(p_0) - \mathbb{E}_{p_0}[\log M]) \\
      =& \ -(\pi_1 H(p_1) + \pi_0 H(p_0)) - (\pi_1\mathbb{E}_{p_1}[\log M] + \pi_0\mathbb{E}_{p_0}[\log M]) \\
      =& \ -(\pi_1 H(p_1) + \pi_0 H(p_0)) - \mathbb{E}_{M}[\log M] \\
      =& \ H(M) - \big[ \pi_1 H(p_1) + \pi_0 H(p_0) \big] \\
      \end{aligned}
    \end{equation*}

    Then we have $\mathbb{E}_{\phi \sim p(\phi|x)} \big[ \text{KL}(p(t|x,\phi) \ || \ p(t|x)) \big] = JS_{\pi} \big( p(\phi|x,t=1) \ || \ p(\phi|x,t=0) \big)$.

    \textbf{Step 2: Prove $IPM_G(p_0(\phi\mid x),p_1(\phi\mid x))\leq\sqrt{\frac{2}{\pi_0\pi_1} \mathbb{E}_{\phi \sim p(\phi|x)}\big[KL(p(t\mid x,\phi)\|p(t\mid x))\big]}$:}

    Combining Lemma 5 and Theorem 7 from \citep{hoyos2023representation} yields the following inequality for any kernel $\kappa$ satisfying $\kappa(x,x)=1$:

    \begin{equation*}
        JS(P\|Q)\geq \frac{\pi_0\pi_1}2 IPM^2_{\kappa^2}(P,Q).
    \end{equation*}

    Since $G$ is generated by a normalized kernel, then we have:

    \begin{equation*}
        JS(p_1 || p_0)\geq \frac{\pi_0\pi_1}2 IPM_G^2(p_0,p_1).
    \end{equation*}

    Therefore, we have $IPM_G(p_0,p_1)\leq\sqrt{\frac{2}{\pi_0\pi_1} \mathbb{E}_{\phi \sim p(\phi|x)}\big[KL(p(t\mid x,\phi)\|p(t\mid x))\big]}$

    \textbf{Step 3: Prove $\mathbb E_{s,x\sim p(s,x)}[KL(p(t\mid x, \phi)\| p(t\mid x))]=0\Rightarrow\mathbb{E}_{x\sim p(x)}[IPM_G(p_0(\phi\mid x),p_1(\phi\mid x))]=0$:}
    
    Since the KL divergence is non-negative, the expectation $E_{\phi,x\sim p(\phi,x)}[KL(p(t\mid x, \phi)\| p(t\mid x))]=0$ implies that $KL(p(t\mid x, \phi)\| p(t\mid x))=0$ for all $\phi, x$ in the support of $p(\phi, X)$ (or almost everywhere with respect to $p(\Phi, X)$).
    
    Furthermore, for any given $x$ and $\phi$, $KL(p(t\mid x, \phi)\| p(t\mid x))=0$ if and only if $p(t\mid x)= p(t\mid x, \phi)$ for all $t\in\{0,1\}$. This equality, $p(t\mid x) = p(t\mid x, \phi)$, directly leads to conditional independence. By the definition of conditional probability, $p(t, \phi \mid x) = p(t \mid \phi, x) p(\phi \mid x)$. Since $p(t \mid \phi, x) = p(t \mid x)$, we have $p(t, \phi \mid x) = p(t \mid x) p(\phi \mid x)$. This factorization is the definition of conditional independence, $t\perp\!\!\!\perp\phi\mid x$. The conditional independence $t\perp\!\!\!\perp\phi\mid x$ implies that the distribution of $\phi$ given $x$ is independent of the value of $t$. Specifically, $p(\phi\mid x, t) = p(\phi\mid x)$ for all $t$. For $t \in \{0, 1\}$, this means $p(\phi\mid x, t=0) = p(\phi\mid x, t=1)$ almost everywhere with respect to the measure for $\phi$. Thus, $p_0(\phi\mid x) = p_1(\phi\mid x)$ almost everywhere for each $x$.
    
    From the definition of the Integral Probability Metric (IPM), we know that $IPM_G(P_1, P_2)=0$ if and only if $P_1=P_2$ (for a suitable class of functions $G$). Therefore, for every $x$, since the conditional distributions $p_0(\phi\mid x)$ and $p_1(\phi\mid x)$ are equal as distributions over $\phi$, we have $IPM_G(p_0(\phi\mid x), p_1(\phi\mid x))=0$.
    
    Finally, taking the expectation over $x$, we obtain $\mathbb{E}_{x\sim p(x)}[IPM_G(p_0(\phi\mid x), p_1(\phi\mid x))]=0$.
\end{proof}

\subsection{Proof of Example~\ref{example:post_treatment_bias}}
\label{app:example}

\begin{proof}
    First, we define the true counterfactual outcome is 
    \begin{equation}
        y(1-t) = \egX + \alpha_2(1-\egT) + \egS(1-\egT) + \mathcal{N}_{cf}(0,1), 
    \end{equation}
    where $\egS(1-\egT) = \egX + \alpha_1(1-\egT) + \egU$. $\mathcal{N}_{cf}(0,1)$ is an independent noise term in $y(1-t)$. so
    \begin{equation}
        \begin{aligned}
            y(1-t) &= \egX + \alpha_2(1-\egT) + (\egX + \alpha_1(1-\egT) + \egU) + \mathcal{N}_{cf}(0,1)\\
            &=2\egX + (\alpha_1+\alpha_2)(1-\egT) + \egU + \mathcal{N}_{cf}(0,1).
        \end{aligned}
    \end{equation}
    \begin{enumerate}[leftmargin=*]
        \item \textbf{Regress from $(\egX, \egT)$:}The predictor is $\hat{y}(1-t) = \mathbb E[y(1-t)|\egX, \egT]$. Given $y(1-t) = 2\egX + (\alpha_1+\alpha_2)(1-\egT) + \egU + \mathcal{N}_{cf}(0,1)$, and since $\mathbb E[\egU|\egX, \egT]=0$ and $\mathbb E[\mathcal{N}_{cf}(0,1)|\egX, \egT]=0$ due to independence,
        \begin{equation*}
            \hat{y}(1-t) = 2\egX + (\alpha_1+\alpha_2)(1-\egT).
        \end{equation*}
        The prediction error is:
        \begin{equation*}
        \begin{aligned}
            \hat{y}(1-t) - y(1-t) =& (2\egX + (\alpha_1+\alpha_2)(1-\egT)) \\
            &- (2\egX + (\alpha_1+\alpha_2)(1-\egT) + \egU + \mathcal{N}_{cf}(0,1)) \\
            =& -\egU - \mathcal{N}_{cf}(0,1).
        \end{aligned}
        \end{equation*}
        The error mean is $\mathbb E[ - \egU - \mathcal{N}_{cf}(0,1)] = 0$.
        The error variance is $Var(-\egU - \mathcal{N}_{cf}(0,1)) = Var(\egU) + Var(\mathcal{N}_{cf}(0,1)) = \sigma_u^2 + 1$.
        The prediction error $\hat{y}(1-t) - y(1-t)$ follows a $\mathcal{N}(0, \sigma_u^2 + 1)$ distribution.
        % Thus, conditional on $(\egX, \egT)$, $y(1-t) \sim \mathcal{N}(\hat{y}(1-t), \sigma_u^2+1)$. 
        \item \textbf{Regress from $(\egX, \egT, \egS)$:}
         Given observed $\egS = \egX + \alpha_1\egT + \egU$. A model-based predictor $\hat{y}(1-t)$ is formed from the structure of the model for $\egY = \egX + \alpha_2\egT + \egS + \mathcal{N}(0,1)$ by substituting $1-\egT$ for $\egT$ and using the observed $\egS$: % Adjusted wording slightly
        \begin{equation*}
     \begin{aligned}
        \hat{y}(1-t) &= \egX + \alpha_2(1-\egT) + \egS \\ % Used \egS instead of \egS_{obs} for consistency, assuming \egS here refers to the observed value/variable
        &= \egX + \alpha_2(1-\egT) + (\egX + \alpha_1\egT + \egU) \\
        &= 2\egX + \alpha_2(1-\egT) + \alpha_1\egT + \egU.
     \end{aligned}
     \end{equation*}
     The prediction error, using the true counterfactual outcome $y(1-t) = 2\egX + (\alpha_1+\alpha_2)(1-\egT) + \egU + \mathcal{N}_{cf}(0,1)$, is: % Clarified y(1-t) source
     \begin{equation*}
     \begin{aligned}
        \hat{y}(1-t) - y(1-t) =& (2\egX + \alpha_2(1-\egT) + \alpha_1\egT + \egU) \\
        &- (2\egX + (\alpha_1+\alpha_2)(1-\egT) + \egU + \mathcal{N}_{cf}(0,1)) \\
        =& \alpha_2(1-\egT) + \alpha_1\egT - (\alpha_1+\alpha_2)(1-\egT) - \mathcal{N}_{cf}(0,1) \\
        =& \alpha_1(2\egT-1) - \mathcal{N}_{cf}(0,1).
      \end{aligned}
      \end{equation*}
      The mean of this error is $\mathbb E[\alpha_1(2\egT-1) - \mathcal{N}_{cf}(0,1)] = \alpha_1(2\egT-1)$ (assuming $E[\mathcal{N}_{cf}]=0$ and $2\egT-1$ is treated as constant or $\alpha_1=0$). 
      The error centered by its mean is $(\hat{y}(1-t) - y(1-t)) - \mathbb{E}[\hat{y}(1-t) - y(1-t)] = -\mathcal{N}_{cf}(0,1)$. 
      The variance of the error centered by its mean is $Var(-\mathcal{N}_{cf}(0,1))=1$.
      The prediction error $\hat{y}(1-t) - y(1-t)$ follows a $\mathcal{N}(\alpha_1(2\egT-1), 1)$ distribution. 
      % Given observed $\egS = \egX + \alpha_1\egT + \egU$. The predictor $\hat{y}(1-t)$ is formed from the model $\egY = \egX + \alpha_2\egT + \egS + \mathcal{N}(0,1)$ by substituting $1-\egT$ for $\egT$ and using $\egS_{obs}$ for $\egS$:
        % \begin{equation*}
        % \begin{aligned}
        %     \hat{y}(1-t) &= \egX + \alpha_2(1-\egT) + \egS_{obs} \\
        %     &= \egX + \alpha_2(1-\egT) + (\egX + \alpha_1\egT + \egU) \\
        %     &= 2\egX + \alpha_2(1-\egT) + \alpha_1\egT + \egU.
        % \end{aligned}
        % \end{equation*}
        % The prediction error, using $y(1-t) = 2\egX + (\alpha_1+\alpha_2)(1-\egT) + \egU + \mathcal{N}_{cf}(0,1)$, is:
        % \begin{equation*}
        % \begin{aligned}
        %     \hat{y}(1-t) - y(1-t) =& (2\egX + \alpha_2(1-\egT) + \alpha_1\egT + \egU) \\
        %     &- (2\egX + (\alpha_1+\alpha_2)(1-\egT) + \egU + \mathcal{N}_{cf}(0,1)) \\
        %     =& \alpha_2(1-\egT) + \alpha_1\egT - (\alpha_1+\alpha_2)(1-\egT) - \mathcal{N}_{cf}(0,1) \\
        %     =& \alpha_1(2\egT-1) - \mathcal{N}_{cf}(0,1).
        % \end{aligned}
        % \end{equation*}
        % The error mean is $\mathbb E[\alpha_1(2\egT-1) - \mathcal{N}_{cf}(0,1)] = \alpha_1(2\egT-1)$.
        % The error centered by its mean is $-\mathcal{N}_{cf}(0,1)$, with variance $Var(-\mathcal{N}_{cf}(0,1))=1$.
        % Thus, $\hat{y}(1-t) \stackrel{d}{\rightarrow} \mathcal{N}(y(1-t) + (2t-1)\alpha_1, 1)$ .
        \item \textbf{Regress from $(\egX, \egT, \egU)$:} The predictor is $\hat{y}(1-t) = \mathbb E[y(1-t)|\egX, \egT, \egU]$.
        Given $y(1-t) = 2\egX + (\alpha_1+\alpha_2)(1-\egT) + \egU + \mathcal{N}_{cf}(0,1)$.
         Since $E[\mathcal{N}_{cf}(0,1)|\egX, \egT, \egU]=0$ as $\mathcal{N}_{cf}(0,1)$ is independent of $\egX, \egT, \egU$,
         \begin{equation}
            \hat{y}(1-t) = 2\egX + (\alpha_1+\alpha_2)(1-\egT) + \egU.
         \end{equation}
         The prediction error is:
         \begin{equation*}
         \begin{aligned}
             \hat{y}(1-t) - y(1-t) =& (2\egX + (\alpha_1+\alpha_2)(1-\egT) + \egU) \\
            &- (2\egX + (\alpha_1+\alpha_2)(1-\egT) + \egU + \mathcal{N}_{cf}(0,1)) \\
            =& -\mathcal{N}_{cf}(0,1).
         \end{aligned}
         \end{equation*}
         The mean of this error is $\mathbb E[-\mathcal{N}_{cf}(0,1)] = 0$.
         The variance of the error is $Var(-\mathcal{N}_{cf}(0,1)) = Var(\mathcal{N}_{cf}(0,1)) = 1$.
         The prediction error $\hat{y}(1-t) - y(1-t)$ follows a $\mathcal{N}(0, 1)$ distribution.
         % Thus, conditional on $(\egX, \egT, \egU)$, $y(1-t) \sim \mathcal{N}(\hat{y}(1-t), 1)$. 
    \end{enumerate}
\end{proof}

\section{Datasets}\label{app:datasets}

\subsection{IHDP}
The IHDP dataset is collected from randomized controlled trial (RCT) experiments designed to evaluate the effect of specialist home visits on the future cognitive test scores of premature infants. \citet{hill2011bayes} removes a non-random subset of the treated group to induce selection bias. In total, IHDP comprises 747 units (139 treated, 608 control), each represented by 25 pre-treatment variables related to the children and their mothers.  Please refer to \citep{hill2011bayes} for more details.
%We report the estimation errors on the same benchmark (100 realizations of the outcomes with 63/27/10 proportion of train/validation/test splits) provided by and used in \citep{}.

% Following \citep{cheng2021long}, we define the post-treatment variables as a time series of length $K$. We consider multidimensional post-treatment variables for each time step. For the unit $i$ at $k \in \{1, 2, \ldots, K\}$, post-treatment variables $s_{ik}^{t_i}\in\mathbb R^m$  are generated as follows:
% \begin{equation}
% s_{ik}^{t_i}\in\mathbb R \sim 
% \begin{cases} 
% x_i \beta_0 + \frac{C_1}{k-1} \sum_{j=1}^{k-1} s_{ij}^{t_i}+ \epsilon_{u}, & t_i = 0, \\ 
% x_i \beta_1 + \frac{C_1}{k-1} \sum_{j=1}^{k-1} s_{ij}^{t_i} + \epsilon_{u}, & t_i = 1, 
% \end{cases}
% \end{equation}
% where the coefficients $\beta_0\in\mathbb{R}^{25\times m}$ are randomly sampled from $(0,1,2,3,4)$ with probabilities $(0.5,0.2,0.15,0.1,0.05)$  and $\beta_1\in\mathbb{R}^{25\times m}$ are randomly sampled from $(-2,-1, 0,1,2)$ with probabilities $(0.2,0.2,0.2,0.2,0.2)$. $C_1$ is a scaling factor. $\epsilon_u\sim Laplace(0,1)$ is exogenous noise. 

Following \citep{cheng2021long}, we define the post-treatment variables for each unit $i$ as a time series $\{s_{ik}^{t_i}\}_{k=1}^K$ of length $K$. We consider $m$-dimensional post-treatment variables, $s_{ik}^{t_i} \in \mathbb{R}^m$, for each time step $k \in \{1, 2, \ldots, K\}$. These variables are generated based on the unit's treatment indicator $t_i \in \{0, 1\}$, baseline covariates $x_i \in \mathbb{R}^{1 \times 25}$, and previous values in the time series.

% Specifically, for unit $i$ at time step $k$, the variable $s_{ik}^{t_i}$ is generated according to the following model:
% \begin{equation}
% \label{eq:generation}
% s_{ik}^{t_i} =
% \begin{cases}
% x_i \beta_0 + \epsilon_{ik}, & t_i = 0, k=1 \\
% x_i \beta_0 + \frac{C_1}{k-1} \sum_{j=1}^{k-1} s_{ij}^{t_i} + \epsilon_{ik}, & t_i = 0, k>1 \\
% x_i \beta_1 + \epsilon_{ik}, & t_i = 1, k=1 \\
% x_i \beta_1 + \frac{C_1}{k-1} \sum_{j=1}^{k-1} s_{ij}^{t_i} + \epsilon_{ik}, & t_i = 1, k>1
% \end{cases}
% \end{equation}
% Here, $C_1$ is a scalar scaling factor, and $\epsilon_{ik} \in \mathbb{R}^m$ is an $m$-dimensional exogenous noise vector, where each element is drawn independently from the Laplace distribution with mean 0 and scale 1, i.e., $\epsilon_{ik}^{(j)} \sim \text{Laplace}(0,1)$ for $j=1, \ldots, m$. The noise term is specific to each unit $i$ and time step $k$.

For $k=1$, the initial post-treatment variable $s_{i1}^{t_i} \in \mathbb{R}^m$ for unit $i$ is directly taken from the observed outcome in the IHDP dataset (distinct from the variable $y$ in our model). 

For time steps $k > 1$, the variable $s_{ik}^{t_i}$ is generated based on the unit's treatment indicator $t_i \in \{0, 1\}$, baseline covariates $x_i \in \mathbb{R}^{25}$, and previous values in the time series $\{s_{ij}^{t_i}\}_{j=1}^{k-1}$. The generative model for $k > 1$ is as follows:
\begin{equation}
\label{eq:generation_k_gt_1}
s_{ik}^{t_i} =
\begin{cases}
x_i \beta_0 + \frac{C_1}{k-1} \sum_{j=1}^{k-1} s_{ij}^{t_i} + \epsilon_u, & t_i = 0 \\
x_i \beta_1 + \frac{C_1}{k-1} \sum_{j=1}^{k-1} s_{ij}^{t_i} + \epsilon_u, & t_i = 1
\end{cases}
\quad \text{for } k > 1.
\end{equation}
Here, $\beta_{t_i}$ represents the coefficient matrix specific to the treatment group ($ \beta_0 \in \mathbb{R}^{25 \times m}$ if $t_i=0$ and $\beta_1 \in \mathbb{R}^{25 \times m}$ if $t_i=1$) and $C_1$ is a scalar scaling factor. The exogenous noise vectors $\epsilon_u \in \mathbb{R}^m$ are generated for $k>1$ by sampling each of their $m$ elements independently from the Laplace distribution with mean 0 and scale 1.

The coefficient matrices $\beta_0 \in \mathbb{R}^{25 \times m}$ and $\beta_1 \in \mathbb{R}^{25 \times m}$ are generated by sampling each of their elements independently according to the specified discrete probability distributions:
\begin{itemize}[leftmargin=*]
    \item Elements of $\beta_0$: sampled from $\{0, 1, 2, 3, 4\}$ with probabilities $\{0.5, 0.2, 0.15, 0.1, 0.05\}$, respectively.
    \item Elements of $\beta_1$: sampled from $\{-2, -1, 0, 1, 2\}$ with probabilities $\{0.2, 0.2, 0.2, 0.2, 0.2\}$, respectively.
\end{itemize}

The outcome is defined as $Y_{t_i}(x_i)=\frac{1}{3N}\sum^{K}_{k=K-2}\sum^N_{j} s^{t_i}_{jk}$. We randomly split the samples into
train/validation/test with a 60/20/20 ratio.

\subsection{NEWS}

The News dataset simulates a media consumer's opinions on various news items, viewed either on a mobile device or desktop. Each news item is represented by word counts $x_i \in \mathbb{N}^V$, where $V$ is the number of words. 
The outcome $y_i \in \mathcal{R}$ reflects reader experience, and the treatment $t_i \in \{0,1\}$ indicates device: desktop ($t=0$) or mobile ($t=1$). 
A topic model trained on a large document set represents consumer preferences. 
Let $k$ be the number of topics and $z(x) \in \mathcal{R}^k$ be the topic distribution of news item $x$, with $z_1^c$ (mobile) and $z_0^c$ (desktop) as centroids in the topic space. 
The outcome is determined by the similarity between $z(x_i)$ and $z_t^c$: $y_i^{t_i} = C(z(x_i)^T z_0^c + t_i \cdot z(x_i)^T z_1^c) + \epsilon$, where $C$ is a scaling factor and $\epsilon \sim N(0,1)$ is noise. In total, News comprises 5,000 news items, each with 3,477 word counts. For more details, please refer to \citep{johansson2016learning}.

We define the post-treatment variables for each unit $i$ as a time series $\{s_{ik}^{t_i}\}_{k=1}^K$ of length $K$. We consider multidimensional $s_{ik}^{t_i}\in\mathbb R^{m\times 1}$ as follows:

\begin{equation}
s_{ik}^{t_i} = C_2 \left( z(x_i)^\top z_0^c + t_i \cdot z(x_i)^\top z_1^c \right) + A s_{i(k-1)}^{t_i} + \epsilon_u,
\end{equation}
where $\epsilon_u$ represents exogenous noise generated from the Laplace distribution with mean 0 and scale 1. $C_2$ is a scaling factor and $A\in \mathbb R^{m\times m}$ is the transfer matrix with eigenvalue less than $0.9$.  
For IHDP, the outcome is defined as the last post-treatment variable. 
The outcome is defined as $Y_{t_i}(x_i)=\frac{1}{3N}\sum^{K}_{k=K-2}\sum^N_{j} s^{t_i}_{jk}$. Table~\ref{table:main_result} presents the results for $K=60$. We randomly split the samples into
train/validation/test with a 60/20/20 ratio.

\subsection{Synthetic}\label{app:synthetic_data}

The synthetic dataset simulates a temporal causal system with interacting variables over time. The initial state variables $x_{i0} \in \mathbb{R}^{N}$ are sampled from a standard normal distribution. The state evolution follows:
\begin{equation}
x_{i(k+1)} = x_{ik} + a_x t_{ik} + \epsilon_x,
\end{equation}
where $a_x \in \mathbb{R}^{N \times 1}$ is a coefficient matrix. The binary treatment $t_{ik}$ is assigned through:
\begin{equation}
t_{ik} \sim \text{Bernoulli}(\text{sigmoid}(x_{ik}\beta_t + \epsilon_t)),
\end{equation}
where $\beta_t \in \mathbb{R}^{N}$ are treatment coefficients. At each timestep $k$, three types of post-treatment variables are generated:
\begin{align}
v_{ik} &= \beta_v x_{ik} + \gamma_v t_{ik} + \epsilon_v, \quad v_{ik} \in \mathbb{R}^{N} \\
m_{ik} &= \beta_m x_{ik} + \gamma_m t_{ik} + \epsilon_m, \quad m_{ik} \in \mathbb{R}^{N} \\
a_{ik} &= \beta_a x_{ik} + \epsilon_a, \quad a_{ik} \in \mathbb{R}^{N}
\end{align}
where $\beta_v, \beta_m, \beta_a \in \mathbb{R}^{N \times N}, \gamma_v, \gamma_m \in \mathbb{R}^{1 \times N}$ are coefficient matrices. The post-treatment variables are concatenated as $s_{ik} = [v_{ik}; m_{ik}; a_{ik}] \in \mathbb{R}^{3N}$. The final outcome is determined by:
\begin{equation}
y_i = \frac{1}{C}\left(\sum_{k=k_0}^K \gamma^{K-k}_y(x_{ik}\beta_y + m_{ik}\beta_m + a_{ik}\beta_a)(1 + \epsilon_u) + t_{iK}\beta_t\right),
\end{equation}
where $\epsilon_u$ represents exogenous noise sampled from Laplace distribution, $\gamma_y = 0.99$ is a decay factor, $k_0 = K - \lfloor \alpha K \rfloor$ with $\alpha$ controlling the temporal window, $\beta_y, \beta_m, \beta_a \in \mathbb{R}^{N \times 1}$ are outcome coefficients for state, mediator and adjustment variables respectively, $\beta_t \in \mathbb{R}$ is the direct treatment effect coefficient, and $C$ is a normalizing constant. All variables are normalized using StandardScaler. All noise terms $\epsilon$ are sampled from Laplace distributions. 

We simulated 10,000 samples for this dataset. 
To validate the Impact of Task Complexity on PIPCFR, we conduct experiments with different values of $K$ and $\epsilon$ as shown in Section~\ref{sec:task_complexity}. Table~\ref{table:main_result} presents the results for $K=60$. We randomly split the samples into
train/validation/test with a 60/20/20 ratio.

% \section{Algorithm}\label{app:algorithm}

% \ifthenelse{\equal{\templateName}{ijcai24}} 
%     {
%         \begin{algorithm}[ht]
%             \caption{PIPCFR: Hindsight Counterfactual Regression}
%             \label{algo}
%             \begin{algorithmic}[1]
%                 \State Input: $X,T,S,Y$
%                 \For {each iteration}
%                     \State Sample a mini-batch $(x,t,s,y) \sim \mathcal{O}$ from the source dataset
%                     \State Optimize the hindsight network ($f_{\hsomega}$), propensity score models ($P_{\theta_{p}}$,$P_{\theta_{hp}}$), and the main network ($f_{\omega}$) jointly using objectives~(\ref{obj:1}),(\ref{obj:2}),(\ref{obj:3})
%                 \EndFor \\
%                 Output: $f_{\omega}$
%             \end{algorithmic}
%         \end{algorithm}
%     }
%     {
%         \begin{algorithm}[ht]
%             \caption{PIPCFR: Hindsight Counterfactual Regression}
%             \label{algo}
%             \begin{algorithmic}[1]
%                 \STATE Input: $X,T,S,Y$
%                 \FOR {each iteration}
%                     \STATE Sample a mini-batch $(x,t,s,y) \sim \mathcal{O}$ from the source dataset
%                     \STATE Optimize the hindsight network ($\hsomega$), propensity score models ($\mu, \hsmu, \lambda$), and the main network ($\omega$) jointly using objectives~(\ref{obj:1}),(\ref{obj:2}),(\ref{obj:3})
%                 \ENDFOR \\
%                 Output: $f_{\omega}$
%             \end{algorithmic}
%         \end{algorithm}
%     }

\section{Training Details}\label{app:algorithm}

We implement PIPCFR using PyTorch. %, employing a TARNET architecture for the student network and CFRNet~\citep{shalit2017estimating} for the teacher network. 
During training, we use Adam~\citep{kingma2014adam} optimizer with a learning rate of 0.001 and a decay rate of 0.95. The batch size is set to 250. Our network architecture consists of 3 Multi-Layer Perceptron (MLP) layers for covariate representations ($\psi_\alpha$) and 4 layers for hypothesis functions ($h$), with all layers maintaining a consistent size of 64 units across all datasets. 
For the post-treatment representation networks $\psi_\eta$, we utilize 3 MLP layers with a hidden dimension of 128.
The propensity score models ($g$ and $\tilde{g}$) comprise 4 MLP layers followed by a sigmoid activation function. 
We set the KL loss weight $\gamma$ to 1 for all datasets, computing all losses per mini-batch at each iteration. 
Following CFRNet~\citep{shalit2017estimating}, we implement two versions of our algorithm using different IPM distance metrics: PIPCFR(MMD) based on Maximum Mean Discrepancy~\citep{gretton2012kernel} and PIPCFR(WASS) using Wasserstein distance~\citep{cuturi2014fast}. Please refer to \citet{shalit2017estimating} for the computation of these distances. All methods were trained using a V100 GPU with 32GB of VRAM.

\end{document}